\documentclass{article}
\usepackage{iclr2027_conference,times}

\usepackage[utf8]{inputenc} 
\usepackage[T1]{fontenc}    
\usepackage{hyperref}       
\usepackage{booktabs}       
\usepackage{amsfonts}       
\usepackage{nicefrac}       
\usepackage{microtype}      
\usepackage{xcolor}         
\hypersetup{hidelinks}

\usepackage{soul}
\usepackage{url}
\usepackage{xspace}
\usepackage{enumitem}
\usepackage[small]{caption}
\usepackage{graphicx}
\usepackage{flafter}
\usepackage{amsmath}
\usepackage{amsthm}
\usepackage{algorithm}
\usepackage{algorithmic}
\usepackage{multirow}
\usepackage{subcaption}  
\usepackage{tikz}
\usepackage{textcomp}
\usepackage{bm}
\usepackage{colortbl}
\definecolor{COLOR_MEAN}{RGB}{240,240,240}  
\usepackage{amsthm}
\newtheorem{proposition}{Proposition}

\let\oldhat\hat

\renewcommand{\hat}[1]{\oldhat{\mathbf{#1}}}

\setlist[itemize]{leftmargin=1.3em,labelsep=0.45em,topsep=0.2em,itemsep=0.15em,parsep=0pt,partopsep=0pt}
\title{Topology-Consistent Task Planning over {\color{black}\mbox{Cellular}} Workflow
Complexes for LLM-based Agents}

\renewcommand{\thefootnote}{\fnsymbol{footnote}}
\author{
\textbf{Sen Zhao\textsuperscript{1} \quad Jia Tang\textsuperscript{1} \quad Ruiqi Kong\textsuperscript{1}} \\
\textbf{Zuyu Zhang\textsuperscript{1} \quad Lifeng Shen\textsuperscript{2} \quad Ding Zou\textsuperscript{3}} \\
\textbf{Xinyu He\textsuperscript{4} \quad Xu Zhang\textsuperscript{1}\thanks{Corresponding authors: Xu Zhang and Junwei Han.} \quad Junwei Han\textsuperscript{5,2}\footnotemark[1]} \\[0.35em]
{\normalfont\small \textsuperscript{1}Academy of Advanced Interdisciplinary Studies,} \\
{\normalfont\small Chongqing University of Posts and Telecommunications, Chongqing, China} \\
{\normalfont\small \textsuperscript{2}School of Artificial Intelligence,} \\
{\normalfont\small Chongqing University of Posts and Telecommunications, Chongqing, China} \\
{\normalfont\small \textsuperscript{3}Intelligent System Department, Zhongxing Telecom Equipment (ZTE),} \\
{\normalfont\small Changsha, Hunan, China} \\
{\normalfont\small \textsuperscript{4}Towngas, China} \\
{\normalfont\small \textsuperscript{5}School of Automation, Northwestern Polytechnical University, Xi'an, China}
}

\iclrfinalcopy

\begin{document}

\maketitle
\fancyhead{}
\renewcommand{\thefootnote}{\arabic{footnote}}
\setcounter{footnote}{0}

\begin{abstract}
  Task planning for LLM agents requires workflows that satisfy both user intent and complex sub-task dependencies. While existing planners work well for sequential or directed acyclic graph (DAG)-like structures, they struggle with workflow patterns such as verification-correction loops, convergent branch merging, and reusable intermediate states that arise naturally in real-world tool orchestration. We present TopoPlanner, a topology-consistent planning framework that lifts tool dependency graphs into cellular workflow complexes and uses them as topology-aware context for LLM tool planning. TopoPlanner retrieves a request-relevant closed subcomplex through cosheaf-consistent cellular retrieval, performs multi-dimensional structural reasoning over the retrieved topology, and interfaces the resulting cellular representation with the planner LLM for tool-sequence generation. {\color{black}Experiments on four tool-planning benchmarks with {\color{black}topology-guided} loop, merge, and loop-merge workflows show consistent improvements over prompt-based and graph-enhanced baselines across different local LLM backbones.}
\end{abstract}

\section{Introduction}
Large language model (LLM) agents are becoming a practical interface for tool use, autonomous exploration, and complex decision making~(\cite{wang2024survey,bubeck2023sparks,DBLP:conf/acl/JiangZZS0ZW25}). Their effectiveness depends critically on task planning: given a natural-language request, the agent must decompose the request into executable sub-tasks, organize their dependencies, and produce a workflow that can be grounded into tools or actions~(\cite{schick2024toolformer,shen2024hugginggpt,shen2024taskbench}). In realistic agentic settings, this workflow must capture not only individual sub-tasks, but also the dependency structure that makes them executable as a coherent process.

This exposes a structural gap in existing planners. Most LLM planning methods are strongest when the workflow is sequential or approximately directed acyclic graph (DAG)-like, where pairwise task adjacency already captures most of the required structure~(\cite{wei2022chain,yao2024tree,shinn2024reflexion}). As illustrated in Fig.~\ref{fig:motivation}, real-world tool orchestration is often more complex: it may involve verification-correction loops, convergent branch merging, and reusable intermediate states. {\color{black}Although these structures can be encoded by edges, their joint regional organization remains implicit in pairwise message passing and pooled graph summaries.} In these settings, the key dependency patterns arise from how multiple relations jointly form loop, merge, and reuse structures, rather than from isolated edges. {\color{black}This motivates \emph{topology-consistent task planning} beyond pairwise adjacency.}
\begin{figure}[!t]
    \centering\includegraphics[width=1\columnwidth]{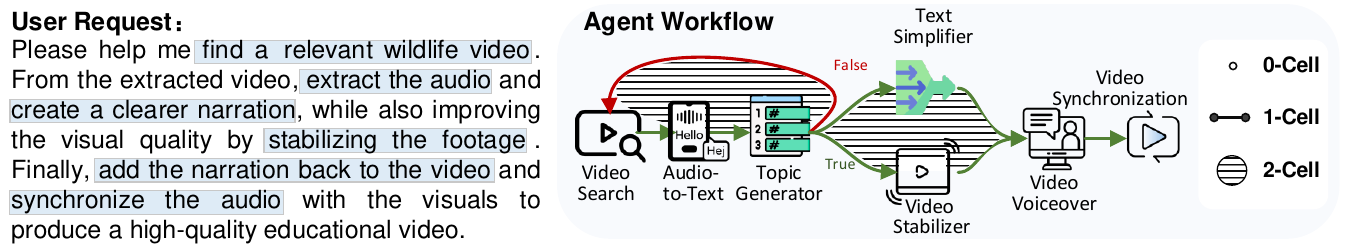}
    \caption{{\color{black}LLM task planning across workflow topologies.}}
    \label{fig:motivation}
    \vspace{-0.5em}
\end{figure}

Recent graph-enhanced approaches inject structural bias into agent planning by coupling LLMs with graphs or GNNs~(\cite{tang2024graphgpt,chai2023graphllm,DBLP:conf/nips/WuSSSWZFCCXL24,DBLP:journals/ral/ChenFCLL25,chen2026gtool}). GTool~(\cite{chen2026gtool}) studies graph-enhanced tool planning by encoding tool-dependency graphs with a conventional node-edge GNN and conditioning the planner on one pooled graph token. These methods improve over pure prompting, but they still reason mainly over node-edge structure and therefore leave cyclic or convergent workflow regions implicit. This is restrictive when the critical structure of a workflow lies in loop, merge, and reuse patterns rather than in isolated pairwise relations. Meanwhile, recent evidence suggests that cyclic structure is not incidental: stronger reasoning models tend to exhibit richer loop patterns in their reasoning graphs~(\cite{DBLP:journals/corr/abs-2506-05744}).

This perspective motivates a unified algebraic-topological formulation of workflow structures. Starting from a task workflow graph, we promote independent cycles induced by non-tree edges into explicit cycle-level cells, resulting in a \emph{cellular workflow complex}. {\color{black}In this complex, boundary relations organize sub-tasks, dependencies, and non-linear workflow regions within one multi-dimensional structure, making region-level dependencies explicit rather than implicit in pairwise paths.}

Building on this formulation, we propose \textbf{TopoPlanner}, a topology-consistent task planning framework for LLM agents. TopoPlanner lifts the tool dependency graph into a cellular workflow complex, retrieves a request-relevant closed subcomplex through cosheaf-consistent cellular retrieval, performs multi-dimensional structural reasoning over the retrieved topology, and interfaces the resulting cellular representation with the planner LLM for tool-sequence generation. To evaluate this setting, we extend four tool-planning datasets, including the three TaskBench domains~(\cite{shen2024taskbench}) and ToolBench~(\cite{chen2026gtool}), with {\color{black}topology-guided} {\color{black}loop, merge, and loop-merge cases} that emphasize non-linear workflow structure. Experiments show that TopoPlanner consistently improves planning quality over prior baselines. Our contributions in this work are summarized as follows:
\begin{itemize}
    \item {\color{black}We formulate loop, merge, and reuse dependencies as explicit regions in cellular workflow complexes, providing a unified multi-dimensional representation of non-linear workflows.}

    \item {\color{black}We introduce TopoPlanner, which lifts a workflow graph into a cellular complex used for cosheaf-consistent retrieval, multi-dimensional reasoning, and topology-guided planning.}

    \item {\color{black}We construct topology-guided extensions of four benchmarks with loop, merge, and loop-merge cases, and show consistent improvements over the compared baselines across different local LLM backbones.}
\end{itemize}

\section{Preliminaries}
\noindent\textit{\textbf{Definition 1. (Cell Complex).}}~\cite{HGh19}
A \textbf{regular cell complex} is a topological space $X$ decomposed into a collection of disjoint subspaces $\{X_\sigma\}_{\sigma \in P_X}$, referred to as \textit{cells}, satisfying the following conditions:
\begin{enumerate}
    \item For each $x \in X$, there exists an open neighborhood intersecting only finitely many cells.
    \item For any pair of cells $X_\sigma, X_\tau$, the intersection $X_\tau \cap \overline{X_\sigma}$ is nonempty if and only if $X_\tau \subseteq \overline{X_\sigma}$, where $\overline{X_\sigma}$ denotes the topological closure of $X_\sigma$.
    \item Each cell $X_\sigma$ is homeomorphic to an open ball in $\mathbb{R}^n$ for some non-negative integer $n$.
    \item \textit{(Regularity)} The closure $\overline{X_\sigma}$ of every cell is homeomorphic to a closed ball in $\mathbb{R}^{n_\sigma}$, with the interior mapped homeomorphically onto $X_\sigma$ itself.
\end{enumerate}
{\color{black}\noindent\textit{\textbf{Definition 2.}} Given a directed workflow graph \(G=(V,E,T)\), let \(G_u\) denote its underlying undirected graph, with the original edge directions retained as attributes. For a connected \(G_u\), a chosen spanning tree \(\mathcal{T}\) defines a \textbf{cellular lifting map} \(f_{\mathcal{T}}\) that constructs a regular cell complex \(X=f_{\mathcal{T}}(G)\): its 0-cells and 1-cells correspond to \(V\) and \(E\), and its 2-cells are attached along the fundamental cycles of \(G_u\) induced by non-tree edges. Thus, the original directed dependencies remain on the 1-skeleton while cycle-level workflow regions are made explicit.}
\par\smallskip
\noindent\textit{\textbf{Definition 3. (Topology-Consistent Task Planning).}}
{\color{black}Let $G = (V, E, T)$ be a text-attributed task graph, and let $X = f_{\mathcal{T}}(G)$ be its lifted regular cell complex under a cellular lifting map $f_{\mathcal{T}}$ (Definition 2), where cells encode sub-tasks and their higher-order dependency structures.}
  {\color{black}Given a user instruction, \emph{topology-consistent task planning} selects a request-relevant subcomplex $\widehat{X} \subseteq X$ according to two structural criteria: (i) boundary completeness, i.e., for every cell $\sigma \in \widehat{X}$, its boundary satisfies $\partial \sigma \subseteq \widehat{X}$; and (ii) request-conditioned incidence compatibility between selected cofaces and their boundary cells, measured by the residual in Section~\ref{sec:cellular_retrieval}. Retrieval favors lower incidence inconsistency rather than imposing a universal correctness threshold. These criteria define topology consistency of the retrieved structural context: they keep a selected higher-dimensional cell together with its boundary and favor incidence-compatible assignments. They do not, by themselves, constitute a correctness or executability certificate for the tool sequence generated by the planner LLM.}


\begin{figure*}[!t]
    \centering\includegraphics[width=1\textwidth]{Pics/model_topo.png}
    \caption{{\color{black}Overview of TopoPlanner.}}
    \label{fig:model}
    \vspace{-1em}
  \end{figure*}
\section{The Proposed Model}
In this section, we present \textbf{TopoPlanner} as a topology-consistent task planning framework over cellular workflow complexes, organized into four modules: \textit{Cellular Workflow Lifting}, \textit{Cosheaf-Consistent Cellular Retrieval}, \textit{Multi-Dimensional Structural Reasoning}, and \textit{Topology-Guided LLM Tool Planning}. Figure~\ref{fig:model} summarizes the overall framework. The retrieved cellular representation is interfaced with the planner LLM through a soft graph token.
\subsection{Cellular Workflow Lifting}
{\color{black}TopoPlanner views the workflow graph as the skeleton of a regular cellular complex and promotes selected cycles into explicit workflow regions. The resulting complex unifies sub-tasks, dependencies, and cycle-level structure, so all later modules operate on one multi-dimensional object.}

{\color{black}Given a directed workflow graph \(G=(V,E,T)\), we form its underlying undirected graph \(G_u\) while retaining each dependency direction as an edge attribute. We regard \(G_u\) as the 1-skeleton of a regular cell complex: each vertex is a \(0\)-cell, and each edge is a \(1\)-cell attached to its endpoints. For a connected graph, we fix a spanning tree \(\mathcal{T}\subseteq G_u\) (Appendix~\ref{algorithm}) and use the non-tree edges to identify fundamental cycles.}
\begin{equation}
{\color{black}\gamma : G_u \to G_u / \mathcal{T},}
\end{equation}
{\color{black}where \(G_u / \mathcal{T}\) denotes the quotient space obtained by collapsing \(\mathcal{T}\) into a single point. Each non-tree edge $e=(u,v)\in E\setminus\mathcal{T}$ induces a unique fundamental cycle \(\ell_e\) in \(G_u\), formed by $e$ and the tree path connecting its endpoints. We attach a cycle-level cell along each such loop:}

\begin{equation}
{\color{black}\varphi_e : \partial D^2 \to G_u^{(1)}, \qquad \varphi_e(\partial D^2) = \ell_e,}
\end{equation}
where $G^{(1)}$ is the 1-skeleton of the induced CW complex. This lifting step turns cycle-level workflow structure into explicit cells while keeping it attached to the same node-edge scaffold.


Formally, the collection of cycle-level cells constitutes
\begin{equation}
X^{(2)} = \{ x^2_e \simeq D^2 \mid e = (u,v) \in E \setminus \mathcal{T} \},
\end{equation}
{\color{black}where each disk \(D^2\) is attached along its boundary \(\partial D^2\cong S^1\) via the loop \(\ell_e\subset G_u\).}

This lifting operation completes the construction of a 2-dimensional regular cell complex:
\begin{equation}
    X = X^{(0)} \cup X^{(1)} \cup X^{(2)},
\end{equation}
where the added cycle-level cells complement the original node-edge scaffold by encoding workflow regions within the same cellular complex. 
In the following, we use \(\mathcal{X}\) to denote this lifted cellular workflow complex.
{\color{black}We encode node-cell tool descriptions with a pre-trained text encoder, initialize dependency cells from textualized relations, and derive cycle-cell features from their boundaries.}

\begin{proposition}
\label{prop.homotopy}
(Proof in Appendix \ref{app.proof.homotopy}.) 
{\color{black}$G_u/\mathcal{T}$ is homotopy-equivalent to $G_u$, and $\gamma$ induces an isomorphism on the first homology group $H_1(G_u;\mathbb{Z})$.}
\end{proposition}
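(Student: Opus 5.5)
The plan is to use the standard fact that collapsing a contractible subcomplex of a CW complex is a homotopy equivalence. We view $G_u$ as a 1-dimensional CW complex, with 0-cells $V$ and 1-cells $E$. The spanning tree $\mathcal{T}$ is then a subcomplex of $G_u$. A finite tree is contractible; we prove this by induction on the number of edges, retracting a leaf edge onto its interior endpoint at each step. The pair $(G_u,\mathcal{T})$ is a CW pair, so it has the homotopy extension property. By the standard result (e.g.\ Hatcher, Prop.~0.17), if $(X,A)$ is a CW pair with $A$ contractible, then the quotient map $X\to X/A$ is a homotopy equivalence. Applied here, $\gamma:G_u\to G_u/\mathcal{T}$ is a homotopy equivalence.

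To make this self-contained, we can sketch the argument concretely. Take a contraction $h_t:\mathcal{T}\to\mathcal{T}$ with $h_0=\mathrm{id}$ and $h_1$ constant. Extend it by the homotopy extension property to $H_t:G_u\to G_u$ with $H_0=\mathrm{id}$. Since $H_1$ sends $\mathcal{T}$ to a point, it factors as $H_1=g\circ\gamma$ for some $g:G_u/\mathcal{T}\to G_u$. We then check that $g\circ\gamma=H_1\simeq\mathrm{id}$ directly. We also check that $\gamma\circ g\simeq\mathrm{id}$, because $H_t$ preserves $\mathcal{T}$ and therefore descends to a homotopy on the quotient. This gives mutually inverse maps up to homotopy.

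For homology, every homotopy equivalence induces isomorphisms on all homology groups. In particular, $\gamma_*:H_1(G_u;\mathbb{Z})\to H_1(G_u/\mathcal{T};\mathbb{Z})$ is an isomorphism. We will add an explicit description to connect with the lifting. The quotient $G_u/\mathcal{T}$ is a wedge of $|E|-|V|+1$ circles, one for each non-tree edge $e$. Hence both groups are $\mathbb{Z}^{|E|-|V|+1}$. Under $\gamma_*$, the class of the fundamental cycle $\ell_e$ maps to the generator of the circle corresponding to $e$, because every tree edge of $\ell_e$ is collapsed.

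The only step needing care is the quotient lemma itself, in particular that homotopy extension applies and that the homotopy descends to the quotient. We will cite the standard reference rather than reprove it. We will also make explicit the assumption, stated in Definition 2, that $G_u$ is connected, since otherwise $\mathcal{T}$ is a spanning forest and the collapse must be done componentwise.
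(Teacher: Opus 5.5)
Your proposal is correct and takes essentially the same route as the paper: the spanning tree is contractible, collapsing a contractible subcomplex of a CW complex is a homotopy equivalence, and homotopy equivalences induce isomorphisms on $H_1$. You also give more detail than the paper, namely the homotopy-extension justification of the quotient lemma (which the paper only asserts) and an explicit wedge-of-$(|E|-|V|+1)$-circles description of $G_u/\mathcal{T}$, which matches the paper's informal ``intuition'' paragraph.
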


\subsection{Cosheaf-Consistent Cellular Retrieval}
\label{sec:cellular_retrieval}
After lifting the workflow graph into a cellular complex, TopoPlanner retrieves a compact subcomplex that is relevant to the current request while remaining compatible with the cellular topology. Given a request \(q\), we encode it as
\begin{equation}
\bm{h}_q = \operatorname{LM}(q) \in \mathbb{R}^{d}.
\end{equation}
Rather than appending a request node and independently ranking cells of different dimensions, we view the request as a relevance functional over the lifted cellular workflow complex. Retrieval is then defined as a closed-subcomplex selection problem, while cross-dimensional compatibility is regularized through a cellular cosheaf.

\textit{Cellular cosheaf structure.}
{\color{black}We introduce a request-conditioned cellular cosheaf \(\mathcal{C}^{q}\) over the lifted complex \(\mathcal{X}\), following recent sheaf- and cosheaf-based formulations of local consistency on structured domains~(\cite{HGh19,barbero2022sheaf,wei2025persistent}). We use the reversed face-poset convention: each cell \(\sigma\) is assigned a stalk vector space \(\mathcal{C}(\sigma)\), and an incidence \(\tau\prec\sigma\) induces a coface-to-boundary corestriction map:}
\begin{equation}
\mathcal{C}^{q}_{\sigma\rightarrow\tau}:
\mathcal{C}(\sigma)\rightarrow \mathcal{C}(\tau),
\end{equation}
{\color{black}This follows the homological boundary direction; under the opposite face-poset convention, the same maps are covariant. We implement each map as a \(q\)-conditioned sigmoid gate over the request, coface, and boundary-cell embeddings.}

\textit{Unified subcomplex retrieval.}
For each cell \(\sigma\), its request relevance is computed by projecting its cellular embedding into the request embedding space:
\begin{equation}
r_q(\sigma)
=
\operatorname{cos}
\left(
\bm{h}_q,\,
\psi(\bm{h}_{\sigma})
\right),
\end{equation}
where \(\psi(\cdot)\) maps cellular embeddings into the request embedding space. For each cell \(\sigma\), we define its cosine-based cosheaf inconsistency as the average residual over its boundary cells:
\begin{equation}
{\color{black}
\epsilon_q(\sigma)
=
\begin{cases}
\displaystyle \frac{1}{|\partial\sigma|}
\sum_{\tau\prec\sigma}
\left(1-
\operatorname{cos}\left(
\mathcal{C}^{q}_{\sigma\rightarrow\tau}(\bm{h}_{\sigma}),
\bm{h}_{\tau}
\right)\right), & |\partial\sigma|>0,\\[2pt]
0, & |\partial\sigma|=0,
\end{cases}}
\end{equation}
which measures how consistently a selected coface agrees with its boundary in the retrieved complex. We retrieve a compact request-relevant subcomplex by solving:
\begin{equation}
\mathcal{X}^{*}_q
=
\underset{
\substack{
\emptyset\ne\mathcal{Y}\subseteq\mathcal{X},\\
\partial\sigma\subseteq\mathcal{Y}\ \forall\sigma\in\mathcal{Y}
}
}{\operatorname{argmax}}
\frac{1}{|\mathcal{Y}|}
\sum_{\sigma\in\mathcal{Y}}
\left(
r_q(\sigma)
\,-\,
\epsilon_q(\sigma)
\right),
\end{equation}
The selected set is kept boundary-closed, i.e., for any retained cell \(\sigma\in\mathcal{X}^{*}_q\), its boundary satisfies \(\partial\sigma\subseteq\mathcal{X}^{*}_q\). Thus, the retrieval objective favors request-relevant cells while discouraging cosheaf-inconsistent boundary assignments. 
{\color{black}Appendix~\ref{app:cellular_theory} provides further interpretations of this construction in terms of fixed-depth regional communication, incidence-consistency energy, and the role of closure in preventing consistency by omission.}
In practice, we cast the selection as a Prize-Collecting Steiner Tree (PCST)-style problem over cells~\citep{he2024gretriever}, where \(r_q(\sigma)\) is the cell prize and \(\epsilon_q(\sigma)\) is the cosheaf-induced cost. We then greedily collect high-prize cells with boundary closure. The training regularizer over the retrieved subcomplex is:
\begin{equation}
\mathcal{L}_{\mathrm{CI}}
=
\frac{1}{|\mathcal{X}^{*}_q|}
\sum_{\sigma\in\mathcal{X}^{*}_q}
\epsilon_q(\sigma).
\end{equation}

\subsection{Multi-Dimensional Structural Reasoning} 

Given the retrieved request-specific subcomplex \(\mathcal{X}^{*}_q\), TopoPlanner performs multi-dimensional reasoning over its cellular topology. This stage jointly updates representations across dimensions via boundary relations and co-facial interactions, so that downstream planning can use embeddings that reflect both local semantics and global workflow structure.

For each cell $\sigma$, the messages are defined as:
\begin{align}
m_{\mathcal{B}}^{(t+1)}(\sigma) = \operatorname{AGG}_{\tau \in \mathcal{B}(\sigma)} M_{\mathcal{B}}\left(\bm{h}_\sigma^{(t)}, \bm{h}_\tau^{(t)}\right),\\
m_{\uparrow}^{(t+1)}(\sigma) = \operatorname{AGG}_{\substack{\tau \in \mathcal{N}_\uparrow(\sigma) \\ \delta \in \mathcal{C}(\sigma, \tau)}} M_{\uparrow}\left(\bm{h}_\sigma^{(t)}, \bm{h}_\tau^{(t)}, {\color{black}\bm{h}_\delta^{(t)}}\right),
\end{align}
where \( \mathcal{B}(\sigma)=\partial\sigma \) denotes the set of lower-dimensional cells on the boundary of cell \( \sigma \), \( \mathcal{N}_\uparrow(\sigma) \) denotes the set of same-dimensional cells that co-occur with \( \sigma \) in at least one higher-dimensional cell \( \delta \), and \( \mathcal{C}(\sigma, \tau) \) denotes the set of such shared higher-dimensional cells. 

Finally, both types of structure-aware messages are combined with the current representation of the cell and passed through an update function to obtain the new representation:
\begin{equation}
{\color{black}\bm{h}_\sigma^{(t+1)} = \mathcal{U} \left( \bm{h}_\sigma^{(t)},\ m_{\mathcal{B}}^{(t+1)}(\sigma),\ m_{\uparrow}^{(t+1)}(\sigma) \right),}
\end{equation}
where \( \mathcal{U}(\cdot) \) is implemented as a nonlinear update function (e.g., ELU + linear). {\color{black}This stage lets each cell encode its local semantics and role in the multi-dimensional workflow.}

\subsection{Topology-Guided LLM Tool Planning}
After multi-dimensional structural reasoning, the retrieved subcomplex \(\mathcal{X}^{*}_q\) contains topology-aware representations of the selected cellular structure. For local planner LLMs, we summarize it as a continuous graph token and inject it together with the textual prompt.

Let \(\bm{h}^{(k)}_\sigma\) denote the updated representation of a \(k\)-cell \(\sigma\in\mathcal{X}^{*(k)}_q\). TopoPlanner summarizes the retrieved cellular structure by request-aware pooling over both cells and dimensions:
\begin{equation}
\bm{h}_{\mathcal{X}_q}
=
\sum_{k=0}^{2}
\beta_k
\sum_{\sigma \in \mathcal{X}_q^{*(k)}}
\alpha_\sigma^{(k)}
\bm{h}_\sigma^{(k)},
\end{equation}
 {\color{black}where \(\alpha_\sigma^{(k)}\) weighs cells per dimension and \(\beta_k\) balances dimensions, both conditioned on the request embedding \(\bm{h}_q\). The vector \(\bm{h}_{\mathcal{X}_q}\) encodes retrieved tools, edges, and higher-dimensional structure.}

We then construct the planner prompt from the tool list and user request, with a special graph placeholder \texttt{[graph]}...\texttt{[/graph]}. As in soft graph-token conditioning~(\cite{tang2024graphgpt,perozzi2024graphtoken}), the textual tokens are embedded by the LLM, while the placeholder embedding is replaced by \(\bm{h}_{\mathcal{X}_q}\). The planner LLM \(M_\theta\) then generates the tool trajectory:
\begin{equation}
\hat{\tau}
=
M_\theta(w_q),
\end{equation}
where \(\hat{\tau}\) is the predicted tool sequence. The output interface remains identical to standard tool-planning benchmarks: TopoPlanner generates tool names, and workflow edges are constructed and evaluated by the same downstream protocol used in the main experiments.

\paragraph{Training Objective.} 
TopoPlanner is trained to generate the target tool trajectory while reducing cosheaf inconsistency over the retrieved cellular subcomplex. Given the ground-truth tool trajectory \(\tau\), {\color{black}the tool-planning negative log-likelihood loss is}
\begin{equation}
\mathcal{L}_{\mathrm{TL}}
=
-\log p_{M_\theta}(\tau \mid w_q).
\end{equation}
We combine this tool-planning loss with the cosheaf inconsistency loss introduced above:
\begin{equation}
\mathcal{L}
=
\mathcal{L}_{\mathrm{TL}}
+
\lambda_{CI}\mathcal{L}_{\mathrm{CI}},
\end{equation}
where \(\lambda_{CI}\) controls the strength of cosheaf-inconsistency regularization.

\begin{table}[!t]
\centering
\small
\renewcommand{\arraystretch}{0.8}
\setlength{\tabcolsep}{2.8pt}
\begin{tabular}{lrrrrrrr}
    \toprule
        \multirow{2}{*}{Dataset} & \multicolumn{5}{c}{Dataset Type} & \multicolumn{2}{c}{Tool Graph} \\
        \cmidrule(lr){2-6}\cmidrule(lr){7-8}
        & Single & Chain & DAG & Cycle & Total & $|V|$ & $|E|$ \\
        \midrule
        HuggingFace & 3,067 & 3,642 & 837 & 1,817 & 9,363 & 23 & 225 \\
        Multimedia & 2,037 & 2,982 & 565 & 1,419 & 7,003 & 40 & 449 \\
        Daily Life & 1,258 & 2,787 & 275 & 1,098 & 5,418 & 40 & 1,560 \\
        ToolBench & 101 & 85 & - & 412 & 598 & 245 & 366 \\
    \bottomrule
\end{tabular}
\caption{{\color{black}Dataset and tool-graph statistics.}}
\label{tab:statistics}
\vspace{-1.7em}
\end{table}

\section{Dataset and Data Extension}
\subsection{Dataset}
The proposed framework is evaluated on four tool-planning datasets: HuggingFace, Multimedia, and Daily Life from TaskBench~(\cite{shen2024taskbench}), plus ToolBench~(\cite{chen2026gtool}); their statistics are summarized in Table~\ref{tab:statistics}. HuggingFace focuses on AI model tasks in NLP and computer vision, Multimedia covers tasks such as downloading and format conversion, Daily Life contains everyday service operations such as shopping and route navigation, and ToolBench features a substantially larger and more diverse API inventory. The original queries are mainly categorized as Single, Chain, and DAG. To evaluate topology-aware planning, we further introduce a Cycle category covering loop, merge, and loop-merge workflows. {\color{black}We use \textit{Cycle} as an umbrella category for workflow motifs that induce a closed region in the lifted cellular representation: directed feedback loops, convergent merges formed by multiple dependency paths, and loop-merge compositions. A merge is represented as a cycle-level cell because its multiple dependency paths bound a closed region; this structural notion does not imply repeated execution. For motifs containing a directed loop, the loop denotes bounded revision, reuse, or finite transformation rather than unbounded execution; runtime repetition and termination are governed by the downstream executor or stopping policy.}


\subsection{Agentic Topology-Guided Data Extension}
To evaluate topology-aware task planning, we extend the datasets with queries whose task graphs contain higher-dimensional workflow structures, as illustrated in Figure~\ref{fig:motivation}. Rather than letting a language model freely invent samples, we first fix a target topology and then generate a natural-language request under that structural constraint. {\color{black}This fixes the dependency structure before generation while allowing variation in request wording.}

{\color{black}The extension uses a code-executing construction workflow inspired by code-action agents~(\cite{wang2024executable}): a human designer specifies the structural constraints, executable scripts instantiate candidate workflows, and an LLM generates each request conditioned on the selected tools and fixed topology. Each accepted workflow is represented by task nodes, task links, task steps, and its generated user request.} The resulting samples are validated, merged into the original dataset format, and split with grouped train-test partitioning to reduce structural leakage. All baselines and TopoPlanner are evaluated on this same extended dataset and split; {\color{black}implementation and validation details are provided in Appendix~\ref{app:prompts}, especially Sections~\ref{app:data_workflow} and~\ref{app:leakage}. Appendix~\ref{app:benchmark_validity} further reports a motif-leakage audit, held-out-topology evaluation, and results on original non-augmented workflows.}

\begin{table*}[!t]
    \centering
    \scriptsize
    \setlength{\tabcolsep}{1.7pt}
    \renewcommand{\arraystretch}{0.76}

    \resizebox{0.94\textwidth}{!}{%
     \begin{tabular}{@{}cl|ccc|ccc|ccc|ccc@{}}
     \toprule
     
     \rowcolor{COLOR_MEAN} &   & \multicolumn{3}{c|}{\textbf{HuggingFace}} & \multicolumn{3}{c|}{\textbf{Multimedia}} & \multicolumn{3}{c|}{\textbf{Daily Life}} & \multicolumn{3}{c}{\textbf{ToolBench}} \\
     
     \rowcolor{COLOR_MEAN}  \multirow{-2}{*}{\textbf{LLM}}  & \multirow{-2}{*}{\textbf{Method}} &  \textit{N}  $\uparrow$ & \textit{L}  $\uparrow$ & \textit{Acc} $\uparrow$ & \textit{N}  $\uparrow$ & \textit{L}  $\uparrow$ & \textit{Acc} $\uparrow$ &  \textit{N}  $\uparrow$ & \textit{L}  $\uparrow$ & \textit{Acc} $\uparrow$ & \textit{N}  $\uparrow$ & \textit{L}  $\uparrow$ & \textit{Acc} $\uparrow$  \\

     \midrule
      \multirow{8}{*}{\textbf{Vicuna-7B}} 
        & Direct & 43.66 & 11.91 & 6.70 & 34.78 & 0.45 & 2.05 & 42.82 & 17.05 & 10.11 & 36.83 & 17.89 & 13.24 \\
        & TalkGraph & 57.21 & 26.18 & 8.37 & 43.25 & 13.85 & 3.37 & 58.97 & 29.74 & 20.23 & 20.13 & 12.79 & 5.88 \\
        & Plan-over-Graph  & 57.36 & 11.71 & 9.74 & 43.18 & 0.59 & 3.66 & 60.41 & 16.27 & 22.99 & 36.83 & 21.46 & 13.24 \\
        & GNN4TaskPlan & 57.17 & 27.01 & 8.68 & 46.90 & 17.29 & 4.83 & 58.97 & 29.74 & 20.23 & 24.34 & 18.24 & 11.76 \\
        & {\color{black}USPilot} & \underline{60.67} & \underline{32.31} & \underline{12.79} & 46.45 & 17.20 & 4.83 & 52.10 & 23.91 & 14.71 & 21.09 & 11.18 & 5.88 \\
        & GTool & 52.33 & 19.73 & 8.09 & \underline{47.17} & \underline{19.64} & \underline{7.07} & \underline{75.10} & \underline{52.40} & \underline{42.96} & \underline{45.53} & \underline{28.27} & \underline{30.83} \\
        
        & \cellcolor{blue!10}TopoPlanner  
        & \cellcolor{blue!10}\textbf{82.23} & \cellcolor{blue!10}\textbf{55.72} & \cellcolor{blue!10}\textbf{42.35} 
        & \cellcolor{blue!10}\textbf{86.80} & \cellcolor{blue!10}\textbf{66.24} & \cellcolor{blue!10}\textbf{54.76} 
        & \cellcolor{blue!10}\textbf{97.27} & \cellcolor{blue!10}\textbf{78.53} & \cellcolor{blue!10}\textbf{86.25}
        & \cellcolor{blue!10}\textbf{55.11} & \cellcolor{blue!10}\textbf{35.96} & \cellcolor{blue!10}\textbf{41.67} \\
        
        
     \midrule

     \multirow{8}{*}{\textbf{CodeLlama-7B}} 
        & Direct & 56.19 & 28.88 & 9.12 & 49.69 & 5.27 & 11.23 & 71.58 & 46.09 & 36.19 & 45.63 & 22.60 & 11.67 \\
        & TalkGraph & 61.85 & 32.74 & 9.12 & 52.58 & 29.19 & 13.80 & 69.38 & 44.32 & 32.99 & 44.75 & 29.46 & 21.67 \\
        & Plan-over-Graph  & 61.23 & 24.37 & 9.26 & 56.55 & 4.97 & 16.36 & 73.48 & 45.50 & 38.86 & 45.63 & 24.06 & 11.67 \\
        & GNN4TaskPlan & 61.55 & 32.92 & 9.41 & 57.01 & 34.94 & 18.80 & 71.04 & 45.64 & 35.12 & 41.98 & 26.99 & 20.83 \\
        & {\color{black}USPilot} & 65.06 & 38.49 & 13.82 & 58.59 & 36.75 & 21.61 & 66.59 & 39.76 & 26.17 & 31.02 & 13.56 & 5.00 \\
        & GTool & \underline{68.35} & \underline{39.28} & \underline{23.53} & \underline{69.23} & \underline{46.25} & \underline{30.37} & \underline{79.56} & \underline{58.29} & \underline{52.58} & \underline{55.57} & \underline{38.76} & \underline{36.67} \\
        & \cellcolor{blue!10}TopoPlanner 
        & \cellcolor{blue!10}\textbf{83.43} & \cellcolor{blue!10}\textbf{59.62} & \cellcolor{blue!10}\textbf{47.35} 
        & \cellcolor{blue!10}\textbf{89.08} & \cellcolor{blue!10}\textbf{69.59} & \cellcolor{blue!10}\textbf{62.20} 
        & \cellcolor{blue!10}\textbf{97.43} & \cellcolor{blue!10}\textbf{77.95} & \cellcolor{blue!10}\textbf{87.29}
        & \cellcolor{blue!10}\textbf{59.51} & \cellcolor{blue!10}\textbf{43.88} & \cellcolor{blue!10}\textbf{40.00} \\

        
    \midrule

     \multirow{8}{*}{\textbf{Vicuna-13B}} 
        & Direct & 50.94 & 23.16 & 7.00 & 43.10 & 8.98 & 4.76 & 57.93 & 32.68 & 20.28 & 42.31 & 18.42 & 9.65 \\
        & TalkGraph & 59.47 & 29.31 & 7.88 & 53.28 & 27.43 & 12.29 & 65.56 & 38.00 & 23.95 & 41.53 & 28.94 & 20.18 \\
        & Plan-over-Graph  & 60.90 & 22.22 & 10.50 & 55.58 & 9.32 & 11.74 & 69.64 & 31.93 & 30.42 & 42.31 & 19.67 & 9.65 \\
        & GNN4TaskPlan & 59.90 & 30.95 & 9.88 & 58.11 & 31.61 & 15.06 & 65.56 & 38.00 & 23.95 & 42.82 & \underline{29.14} & 21.93 \\
        & {\color{black}USPilot} & 63.53 & 37.30 & 14.00 & 57.42 & 31.82 & 14.62 & 60.93 & 32.77 & 19.23 & 31.72 & 13.10 & 5.26 \\
        & GTool & \underline{68.98} & \underline{41.03} & \underline{26.03} & \underline{64.87} & \underline{39.53} & \underline{21.71} & \underline{81.49} & \underline{62.14} & \underline{59.11} & \underline{45.07} & 28.52 & \underline{22.50} \\
        
        & \cellcolor{blue!10}TopoPlanner
        & \cellcolor{blue!10}\textbf{83.49} & \cellcolor{blue!10}\textbf{58.32} & \cellcolor{blue!10}\textbf{44.26} 
        & \cellcolor{blue!10}\textbf{89.39} & \cellcolor{blue!10}\textbf{69.20} & \cellcolor{blue!10}\textbf{61.83} 
        & \cellcolor{blue!10}\textbf{98.09} & \cellcolor{blue!10}\textbf{78.74} & \cellcolor{blue!10}\textbf{89.00}
        & \cellcolor{blue!10}\textbf{49.77} & \cellcolor{blue!10}\textbf{31.06} & \cellcolor{blue!10}\textbf{27.66}  \\

    \midrule

     \multirow{8}{*}{\textbf{CodeLlama-13B}} 
        & Direct & 59.70 & 31.12 & 11.47 & 60.82 & 0.34 & 18.54 & 75.78 & 50.92 & 44.41 & 48.00 & 24.91 & 15.00 \\
        & TalkGraph & 63.22 & 36.20 & 8.68 & 60.61 & 36.68 & 19.51 & 73.39 & 49.03 & 38.21 & 43.42 & 28.74 & 22.50 \\
        & Plan-over-Graph  & 64.12 & 27.87 & 11.62 & 63.76 & 0.42 & 20.12 & 76.17 & 49.21 & 44.23 & 48.00 & 25.59 & 15.00 \\
        & GNN4TaskPlan & 64.51 & 37.88 & 10.59 & 64.20 & 40.56 & 23.41 & 73.39 & 49.03 & 38.21 & 40.91 & 27.31 & 21.67 \\
        & {\color{black}USPilot} & 67.89 & 42.99 & 15.59 & 64.75 & 41.18 & 23.66 & 68.26 & 42.72 & 30.81 & 32.03 & 13.00 & 8.33 \\
        & GTool & \underline{71.59} & \underline{47.24} & \underline{33.68} & \underline{73.80} & \underline{52.55} & \underline{39.63} & \underline{82.86} & \underline{63.28} & \underline{60.65} & \underline{57.89} & \underline{38.74} & \underline{35.83} \\
        
        & \cellcolor{blue!10}TopoPlanner 
        & \cellcolor{blue!10}\textbf{83.07} & \cellcolor{blue!10}\textbf{58.21} & \cellcolor{blue!10}\textbf{45.29} 
        & \cellcolor{blue!10}\textbf{90.25} & \cellcolor{blue!10}\textbf{70.93} & \cellcolor{blue!10}\textbf{67.07} 
        & \cellcolor{blue!10}\textbf{97.36} & \cellcolor{blue!10}\textbf{77.08} & \cellcolor{blue!10}\textbf{86.77}
        & \cellcolor{blue!10}\textbf{66.48} & \cellcolor{blue!10}\textbf{49.06} & \cellcolor{blue!10}\textbf{49.17} \\
        
      \bottomrule
    \end{tabular}
    }
      \caption{{\color{black}Results on four benchmarks with local LLM backbones; best values are bold and second-best values are underlined.}}
    \label{tab:result}
    \vspace{-2.0em}
\end{table*}

\section{Experiments}
\subsection{Experimental Setup}

\noindent \textbf{Evaluation Metrics.} Evaluation metrics are adopted from prior work~(\cite{shen2024taskbench,shen2024hugginggpt}): Node F1-Score (\textit{n-F1}) and Link F1-Score (\textit{l-F1}) to measure the accuracy of task predictions and inter-task dependencies, respectively. Additionally, Accuracy (\textit{ACC}) evaluates the overall success rate.

\noindent \textbf{LLMs.} Our main experiments use open-source large language models, including CodeLlama-7B/13B-Instruct-hf~(\cite{Rozire2023CodeLlama}) and Vicuna-7B/13B-v1.5~(\cite{Vicuna}). We additionally evaluate TopoPlanner with API-based LLM backbones, including Qwen3-235B-A22B-Instruct~(\cite{DBLP:journals/corr/abs-2505-09388}) and {\color{black}DeepSeek-V3-671B}~(\cite{DBLP:journals/corr/abs-2412-19437}), in Appendix~\ref{app:large_llm_results}.

\noindent \textbf{Baselines.}
\textbf{Direct:} Queries a pre-trained LLM directly to generate task plans without using any graph structure;
\textbf{TalkGraph}~(\cite{DBLP:conf/iclr/FatemiHP24}): Encodes graph-structured data as text for black-box LLMs, enabling reasoning over nodes and edges through prompt engineering;
\textbf{GNN4TaskPlan}~(\cite{DBLP:conf/nips/WuSSSWZFCCXL24}): Leverages LLMs to interpret user requests and employs GraphSAGE with ranking-loss training to retrieve relevant sub-tasks;
\textbf{Plan-over-Graph}~(\cite{zhang2025plan}): {\color{black}Leverages an LLM} to extract task graphs and applies topological sorting over LLM-predicted task dependencies to generate execution plans;
\textbf{USPilot}~(\cite{DBLP:journals/ral/ChenFCLL25}): Combines a large language model with an LLM-enhanced GNN, using dynamic routing and adapters for task planning;
\textbf{GTool}~(\cite{chen2026gtool}): A strong graph-enhanced tool-planning baseline that encodes the tool-dependency graph with a conventional node-edge GNN, pools the graph into a single soft graph token for the planner LLM, and adds an auxiliary edge-existence objective.

\noindent \textbf{Implementation Details.}
{\color{black}All local-model training and evaluation experiments are conducted on four NVIDIA H800-80G GPUs.} We use \texttt{e5-large-v2}~(\cite{Wang2022e5}) to initialize tool, request, and cellular features. 
TopoPlanner is configured with a hidden dimension of 1024. In the main local-LLM setting, the planner backbone is frozen and only the cellular modules and graph-token interface are trained with AdamW~(\cite{Kingma2014AdamAM}) (\(lr=1\times10^{-3}\), weight decay 0.05, batch size 1) for at most 12 epochs with early stopping (patience 2). For the 7B LoRA continuation experiments, we use a smaller learning rate of \(1\times10^{-5}\). The maximum input and generation lengths are 1024 and 128, respectively. We {\color{black}sweep} the coefficient of \(\mathcal{L}_{CI}\) by varying \(\lambda_{CI}\in\{0, 0.005, 0.01, 0.05, 0.1\}\) across the three TaskBench domains. 

\begin{table}[!t]
    \centering
    \scriptsize
    \tiny
    \setlength{\tabcolsep}{1.1pt}
    \renewcommand{\arraystretch}{0.60}
    \resizebox{0.76\columnwidth}{!}{%
    \begin{tabular}{@{}cc|ccc|ccc|ccc@{}}
    \toprule
    \rowcolor{COLOR_MEAN} &   & \multicolumn{3}{c|}{\textbf{HuggingFace}} & \multicolumn{3}{c|}{\textbf{Multimedia}} & \multicolumn{3}{c}{\textbf{Daily Life}} \\
    \rowcolor{COLOR_MEAN} \multirow{-2}{*}{\textbf{LLM}} & \multirow{-2}{*}{\textbf{Method}}
    & \textit{N} $\uparrow$ & \textit{L} $\uparrow$ & \textit{Acc} $\uparrow$
    & \textit{N} $\uparrow$ & \textit{L} $\uparrow$ & \textit{Acc} $\uparrow$
    & \textit{N} $\uparrow$ & \textit{L} $\uparrow$ & \textit{Acc} $\uparrow$ \\
    \midrule
    \multirow{4}{*}{\textbf{Vicuna-7B}}
        & -w/o 2-cell & 75.58 & 46.61 & 30.88 & 73.29 & 50.13 & 34.15 & 81.25 & 62.42 & 59.28 \\
        & -w/o Retrieval & 76.07 & 47.88 & 32.21 & 77.97 & 55.11 & 35.37 & 82.99 & 65.65 & 64.26 \\
        & -w/o lci & 80.69 & 53.31 & 38.68 & 81.47 & 58.29 & 40.98 & 93.13 & 71.44 & 69.76 \\
        & \cellcolor{blue!10}\textbf{TopoPlanner}
        & \cellcolor{blue!10}\textbf{82.23} & \cellcolor{blue!10}\textbf{55.72} & \cellcolor{blue!10}\textbf{42.35}
        & \cellcolor{blue!10}\textbf{86.80} & \cellcolor{blue!10}\textbf{66.24} & \cellcolor{blue!10}\textbf{54.76}
        & \cellcolor{blue!10}\textbf{97.27} & \cellcolor{blue!10}\textbf{78.53} & \cellcolor{blue!10}\textbf{86.25} \\
    \midrule
    \multirow{4}{*}{\textbf{CodeLlama-7B}}
        & -w/o 2-cell & 78.95 & 51.87 & 35.29 & 75.90 & 56.35 & 43.54 & 82.86 & 64.35 & 63.75 \\
        & -w/o Retrieval & 79.75 & 56.96 & 35.59 & 83.89 & 62.92 & 49.27 & 86.85 & 67.05 & 53.66 \\
        & -w/o lci & 80.8 & 53.83 & 38.82 & 87.12 & 67.61 & 53.54 & 95.41 & 76.2 & 75.95 \\
        & \cellcolor{blue!10}\textbf{TopoPlanner}
        & \cellcolor{blue!10}\textbf{83.43} & \cellcolor{blue!10}\textbf{59.62} & \cellcolor{blue!10}\textbf{47.35}
        & \cellcolor{blue!10}\textbf{89.08} & \cellcolor{blue!10}\textbf{69.59} & \cellcolor{blue!10}\textbf{62.20}
        & \cellcolor{blue!10}\textbf{97.43} & \cellcolor{blue!10}\textbf{77.95} & \cellcolor{blue!10}\textbf{87.29} \\
    \bottomrule
    \end{tabular}
    }
    \caption{{\color{black}TopoPlanner ablations on three datasets with 7B local LLMs.}}
    \label{tab:ablation}
    \vspace{-1em}
\end{table}

\begingroup
\color{black}
\captionsetup[figure]{labelfont={color=black},textfont={color=black}}
\begin{figure}[!t]
    \centering
    \vspace{-0.5em}
    {\scriptsize\textbf{Vicuna-7B}}\par\vspace{0.1em}
    \includegraphics[width=0.78\linewidth]{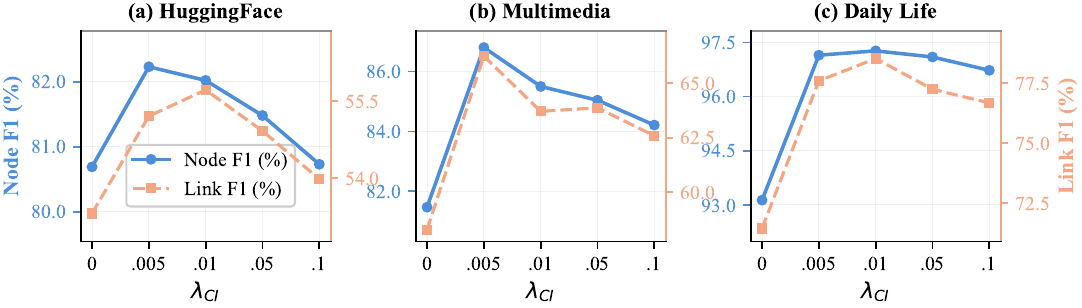}\par\vspace{0.25em}
    {\scriptsize\textbf{CodeLlama-7B}}\par\vspace{0.1em}
    \includegraphics[width=0.78\linewidth]{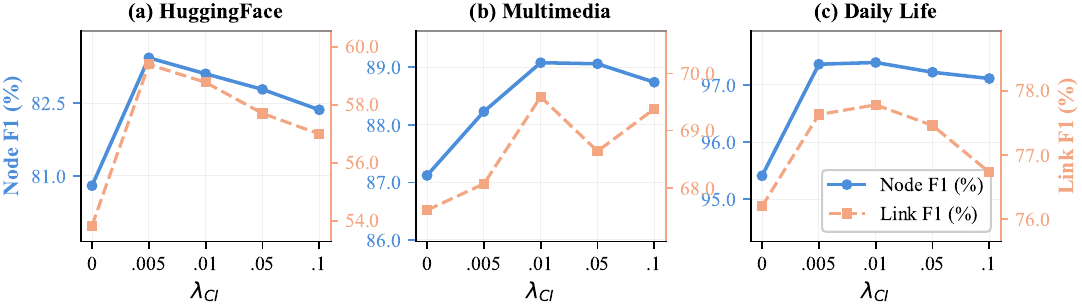}
    \vspace{-0.45em}
    \caption{{\color{black}\(\lambda_{CI}\) sensitivity across datasets and 7B backbones.}}
    \label{fig:sensitivity}
    \vspace{-1.7em}
\end{figure}
\endgroup
\subsection{Experimental Results}

\paragraph{Overall Performance.}
{\color{black}Table~\ref{tab:result} compares methods under open-source local LLMs:}

{\color{black}\textbf{TopoPlanner consistently improves both tool selection and workflow construction.} It outperforms the compared prompting and graph-enhanced baselines across the evaluated datasets and local LLM backbones. More importantly, the improvement is not confined to n-F1: the gains over the strongest baseline are often more pronounced in l-F1 and ACC, which require recovering dependencies and the complete plan rather than only identifying relevant tools. For example, on Multimedia with Vicuna-7B, TopoPlanner improves over GTool from 47.17/19.64/7.07 to 86.80/66.24/54.76 in n-F1/l-F1/ACC; on the larger ToolBench graph with CodeLlama-13B, the corresponding scores increase from 57.89/38.74/35.83 to 66.48/49.06/49.17.}

{\color{black}\textbf{These results align with the structural gap motivating TopoPlanner.} The strongest comparator in most settings, GTool, already conditions the same planner backbones on a pooled node--edge graph representation. TopoPlanner instead makes loop, merge, and loop-merge regions explicit as 2-cells, retrieves their request-relevant boundary-closed subcomplex, and propagates information across dimensions before conditioning the planner. Its additional gains over GTool are therefore consistent with the value of preserving the regional structures highlighted in Figure~\ref{fig:motivation}, rather than flattening them into pairwise relations. The advantage persists across the 7B and 13B Vicuna and CodeLlama families, suggesting that this structural interface complements backbone capacity.}

{\color{black}\textbf{To isolate the sources of these gains,} the ablations below evaluate 2-cell lifting, request-specific cellular retrieval, and its consistency objective. Appendix~\ref{app:component_diagnostics} tests boundary closure and an alternative hypergraph representation, while Appendix~\ref{app:structure_diagnostics} directly evaluates merge/reuse recall and dependency order. Together, these diagnostics test the proposed regional representation and boundary-complete retrieval.}
\vspace{-0.5em}
\paragraph{Ablation Study.}
{\color{black}Table~\ref{tab:ablation} compares the full model with \texttt{-w/o 2-cell}, which uses the edge-only 1-skeleton; \texttt{-w/o Retrieval}, which removes request-specific closed-subcomplex retrieval; and \texttt{-w/o lci}, which removes the retrieval-consistency supervision.}
\vspace{-0.5em}
\begin{itemize}
    \item {\color{black}\textbf{Explicit higher-order lifting provides useful regional context.} Replacing the cellular workflow complex with its 1-skeleton causes a clear performance drop across both backbones and all three domains. Under the evaluated architectures, explicit 2-cell regions provide context for loop, merge, and loop-merge patterns beyond an edge-only model.}

    \item {\color{black}\textbf{Cellular retrieval and its consistency signal are complementary.} Relative to removing \(\mathcal{L}_{CI}\), removing retrieval causes a larger drop in 17 of the 18 backbone--domain--metric comparisons; the sole exception is CodeLlama-7B/HuggingFace l-F1. The full model exceeds both variants in all 18 comparisons.}
\end{itemize}

\begin{figure}[!t]
	\centering
	\includegraphics[width=0.95\linewidth]{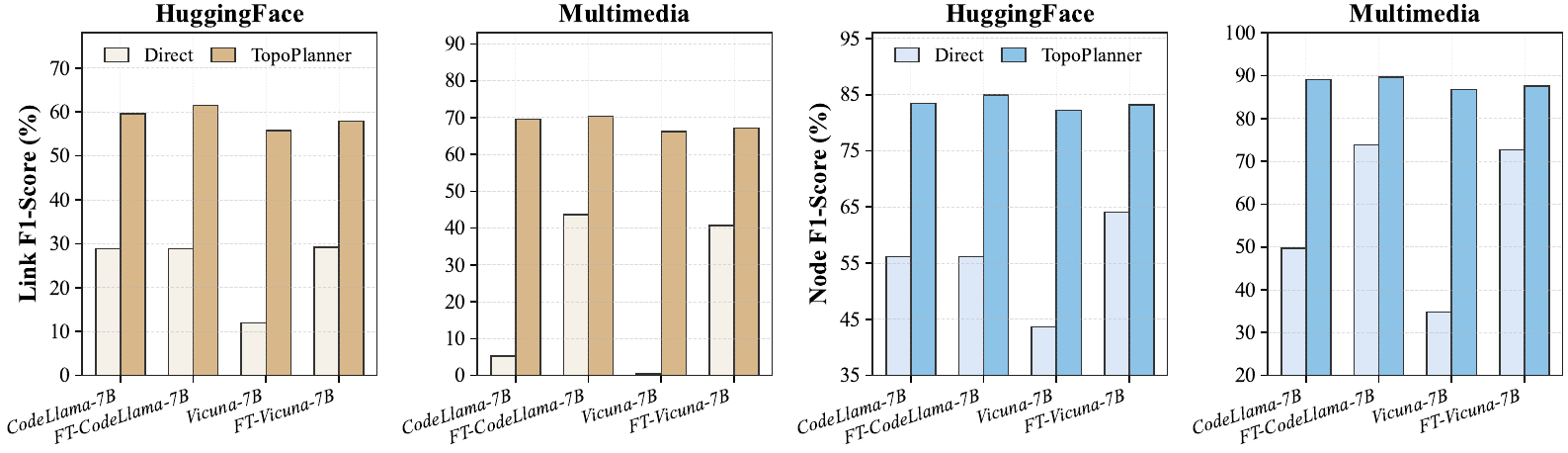}
	\vspace{-0.45em}
	\caption{{\color{black}One-epoch LoRA continuation results.}}
	\label{fig:finetuning}
	\vspace{-1.1em}
\end{figure}

\vspace{-0.5em}
{\color{black}\noindent\textbf{Sensitivity to \(\lambda_{CI}\).} Figure~\ref{fig:sensitivity} probes the consistency term in cellular retrieval, which is designed to keep the request-relevant subcomplex boundary-compatible rather than selecting cells independently. All six panels peak at a positive \(\lambda_{CI}\), usually 0.005 or 0.01, whereas larger weights generally plateau or reduce performance. This pattern supports the intended balance of TopoPlanner: regional consistency complements semantic relevance but should not dominate it.}

{\color{black}\noindent\textbf{Effect of fine-tuning.} Figure~\ref{fig:finetuning} tests whether backbone adaptation alone closes the structural gap highlighted in Figure~\ref{fig:motivation}. One-epoch LoRA improves Direct, especially on Multimedia, but the corresponding TopoPlanner result remains higher before and after adaptation in all four panels. The result is consistent with the two signals being complementary: fine-tuning strengthens the planner, while cellular conditioning preserves explicit information about non-linear dependency regions. This conclusion is limited to the tested one-epoch setting.} {\color{black}Qualitative analyses of loop, merge, and loop-merge workflows are provided in Appendix~\ref{app:qualitative_analysis}.}

\section{{\color{black}Related Work}}
Task planning in LLM-based agents aims to decompose natural-language requests into executable sub-tasks. Most existing methods are prompt-based, including task decomposition~(\cite{wei2022chain}), multi-plan search~(\cite{yao2024tree,besta2024graph}), symbolic planning~(\cite{liu2023llm+}), and reflection-based refinement~(\cite{shinn2024reflexion}). These approaches have broadened the planning ability of LLMs, but they usually assume sequential, tree-like, or DAG-like workflows, which makes it difficult to capture richer dependency patterns such as loops, merges, and feedback structures.

Another line of work studies graph reasoning with LLMs. GPT4Graph and NLGraph evaluate how pretrained LLMs process graph problems in natural language~(\cite{guo2023gpt4graph,wang2024can}), while TalkLikeAGraph and Plan-over-Graph encode graphs as text and rely on the LLM to reason over dependencies or derive execution orders~(\cite{DBLP:conf/iclr/FatemiHP24,zhang2025plan}). These results suggest that LLMs can exploit graph-structured context, but text-only reasoning still lacks explicit structural bias on more complex topological relations.

To address this limitation, several hybrid LLM+GNN methods inject graph representations into LLMs. GraphGPT, GraphLLM, and GraphToken incorporate learned graph embeddings or graph tokens into the language-model input on graph-centric tasks~(\cite{tang2024graphgpt,chai2023graphllm,perozzi2024graphtoken}). In tool planning, GNN4TaskPlan~(\cite{DBLP:conf/nips/WuSSSWZFCCXL24}) uses an LLM to decompose user goals and a GNN matcher to retrieve relevant sub-tasks, while USPilot~(\cite{DBLP:journals/ral/ChenFCLL25}) combines an LLM with an LLM-enhanced GNN, dynamic routing, and adapters. GTool~(\cite{chen2026gtool}) encodes the tool-dependency graph with a node-edge GNN, pools it into a single graph token for the planner LLM, and adds an auxiliary edge-existence objective. Despite their strength, these methods still reason mainly over node-edge structure and do not explicitly elevate higher-order workflow regions into the planning representation. {\color{black}Appendix~\ref{app:related} reviews topological deep learning.}

\section{Conclusion}
{\color{black}In this work, we present \textbf{TopoPlanner}, a topology-consistent task planning framework that makes loop, merge, and loop-merge regions explicit by lifting tool dependency graphs into cellular workflow complexes. The lifted structure supports request-specific retrieval, multi-dimensional reasoning, and topology-guided tool-sequence generation. We also construct topology-guided extensions of four tool-planning benchmarks. Across the evaluated datasets and local LLM backbones, TopoPlanner consistently outperforms the compared baselines; the ablations and structure-specific diagnostics further support explicit regional modeling as an effective inductive bias within the evaluated scope.}

\subsection*{{\color{black}AI use statement}}
{\color{black}Generative AI tools assisted with language editing and implementation of the topology-guided data-construction pipeline; an LLM also generated requests under fixed workflow constraints, as documented in Appendix~\ref{app:data_workflow}. All outputs and scientific claims were reviewed and verified by the authors.}

\bibliographystyle{iclr2027_conference}
\bibliography{reference}

\appendix
\clearpage
\appendix

\section{Proofs}
\subsection{Proofs regarding Proposition \ref{prop.homotopy}}
\label{app.proof.homotopy}
\begin{proof}
\textbf{Contractibility of the spanning tree.}  
{\color{black}For the connected underlying undirected graph $G_u$, a spanning tree $\mathcal{T}$ is connected and acyclic, hence contractible.}
{\color{black}A contractible subspace can be continuously shrunk to a point.}

\textbf{Collapsing a contractible subspace.}  
{\color{black}Consider the quotient map $\gamma:G_u\to G_u/\mathcal{T}$ that identifies all points of $\mathcal{T}$ with a single vertex $v_0$. Because $\mathcal{T}$ is a contractible subcomplex of the graph, the quotient map is a homotopy equivalence. Hence there exists a homotopy inverse $s:G_u/\mathcal{T}\to G_u$ such that $s\circ\gamma\simeq\mathrm{id}_{G_u}$ and $\gamma\circ s\simeq\mathrm{id}_{G_u/\mathcal{T}}$. Thus, $G_u\simeq G_u/\mathcal{T}$.}

\textbf{Induced isomorphism on first homology.}  
Homotopy-equivalent spaces have isomorphic homology groups.  
Hence the induced map 
$\gamma_*: H_1(G_u;\mathbb{Z}) \to H_1(G_u/\mathcal{T};\mathbb{Z})$ 
is an isomorphism.

\textbf{Intuition in graph terms.}  
{\color{black}The spanning tree $\mathcal{T}$ contains no cycles, so collapsing it does not remove or merge any cycles in $G_u$. Each fundamental cycle in $G_u$ (formed by a non-tree edge and the unique tree path connecting its endpoints) becomes a loop based at the collapsed tree vertex. Thus, collapsing $\mathcal{T}$ preserves the independent cycles measured by $H_1(G_u)$.}
\end{proof}

\subsection{Proof of the Fundamental-Cycle Basis Claim}
\label{app.proof.fundamental_cycles}
\begin{proof}
{\color{black}Let $G_u=(V,E)$ be a finite connected undirected graph underlying a directed workflow graph, and let $\mathcal{T}\subset G_u$ be a spanning tree. Edge directions remain attributes for planning but do not alter the cycle-space argument.}

\textbf{Existence and uniqueness of fundamental cycles.}  
For each non-tree edge $e=(u,v) \in E \setminus \mathcal{T}$, there exists a unique simple path $P_{\mathcal{T}}(u,v)$ in $\mathcal{T}$ connecting $u$ and $v$, by acyclicity of $\mathcal{T}$.  
Concatenating $e$ with $P_{\mathcal{T}}(u,v)$ defines a unique simple cycle 
\[
C_e = e \cup P_{\mathcal{T}}(u,v) \subset G.
\]  
{\color{black}Under the quotient map $\gamma:G_u\to G_u/\mathcal{T}$, the tree path $P_{\mathcal{T}}(u,v)$ is mapped to that point, so $C_e$ becomes a nontrivial loop in $G_u/\mathcal{T}$.}

\textbf{Spanning and independence in homology.}  
{\color{black}The cyclomatic number of $G_u$ is $\beta_1(G_u)=|E|-|V|+1$, which equals the number of non-tree edges.}
Thus there are exactly $\beta_1(G)$ fundamental cycles.  

{\color{black}Any cycle in $G_u$ is a linear combination of these fundamental cycles: whenever traversal encounters a non-tree edge $e$, the corresponding $C_e$ eliminates the associated tree segment.}

{\color{black}These cycles are independent in $H_1(G_u)$: under $\gamma$, each fundamental cycle maps to a distinct loop in $G_u/\mathcal{T}$, and loops around different edges of a wedge of circles are homologically independent.}

Hence the set 
\[
\{ [C_e] \mid e \in E \setminus \mathcal{T} \}
\]
{\color{black}forms a basis of $H_1(G_u)$ under the stated connected-graph assumption.}

\textbf{Topological summary.}  
{\color{black}Different choices of spanning tree yield different fundamental-cycle edge sets, but their homology classes always span $H_1(G_u)$. These loops therefore capture all independent cycle-space relations of the underlying graph, while the retained edge directions preserve the workflow semantics used by the planner.}
\end{proof}

\begingroup
\color{black}
\section{Theoretical Role of the Cellular Structure}
\label{app:cellular_theory}

The homotopy and cycle-basis results above justify the lifting construction. We next clarify what the lifted cells contribute to representation learning and retrieval. These statements characterize the structural inductive bias of TopoPlanner; boundary closure alone does not certify runtime correctness.

\begin{proposition}[Fixed-depth regional communication]
Let $R$ be a workflow region containing two boundary elements whose graph distance exceeds $L$ in the pairwise 1-skeleton. In an $L$-layer local message-passing network, the representation of either element cannot depend on the other. If $R$ is represented by a shared 2-cell, every boundary element is incident to that cell, and information can be exchanged across the region through a boundary-to-coface update followed by a coface-to-boundary update, regardless of boundary length.
\end{proposition}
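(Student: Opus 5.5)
The statement has two halves, and I will prove each by its own mechanism. The negative half is a receptive-field argument on the 1-skeleton. The positive half is an explicit two-step computation in the cellular message-passing scheme of Section 3.3.

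\textbf{Negative half.} I first formalize an $L$-layer local message-passing network on the 1-skeleton as updates $\bm{h}^{(t+1)}_v=\mathcal{U}\bigl(\bm{h}^{(t)}_v,\operatorname{AGG}_{w\in N(v)}M(\bm{h}^{(t)}_v,\bm{h}^{(t)}_w)\bigr)$. Here $N(v)$ is the set of graph neighbours of $v$; edge elements can be handled by working on the subdivided graph, which changes distances only by a constant factor that I will absorb into the definition of distance. By induction on $t$, I show that $\bm{h}^{(t)}_v$ is a function only of the initial features $\{\bm{h}^{(0)}_w : d(v,w)\le t\}$. The base case $t=0$ is trivial. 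For the inductive step, the update at $v$ reads $\bm{h}^{(t)}_v$ and $\bm{h}^{(t)}_w$ for neighbours $w$, and each of these depends only on features within distance $t$ of itself, hence within distance $t+1$ of $v$. Taking $t=L$ with $d(a,b)>L$, changing $\bm{h}^{(0)}_b$ leaves $\bm{h}^{(L)}_a$ invariant, which is the precise meaning of ``cannot depend on''.

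\textbf{Positive half.} I use the lifting of Definition 2: the region $R$ is a 2-cell $x^2_e$ attached along the fundamental cycle $\ell_e$, so every boundary 1-cell and its endpoints lie in $\overline{x^2_e}$. Take two boundary edges $a,b\in\partial x^2_e$.
\begin{itemize}
\item In one layer, the 2-cell receives the boundary message $m_{\mathcal{B}}(x^2_e)=\operatorname{AGG}_{\tau\in\partial x^2_e}M_{\mathcal{B}}(\bm{h}_{x^2_e},\bm{h}_\tau)$, so $\bm{h}^{(1)}_{x^2_e}$ depends on $\bm{h}^{(0)}_b$.
\item In the next layer, $a$ receives information from the coface. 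This happens either through the upper-adjacency message $m_\uparrow(a)$, since $b\in\mathcal{N}_\uparrow(a)$ with $x^2_e\in\mathcal{C}(a,b)$ and $\bm{h}_\delta$ enters $M_\uparrow$ directly, or through a coface-to-boundary channel if one is included.
\item Hence $\bm{h}^{(2)}_a$ depends on $\bm{h}^{(0)}_b$ after a constant number of layers (two), independent of $|\ell_e|$.
\end{itemize}
For boundary 0-cells, I either compose this with one extra vertex--edge step, or use $\mathcal{N}_\uparrow$ at dimension 0. Either way the depth stays bounded by a constant independent of $|\ell_e|$.

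\textbf{Main obstacle.} The hard part is stating the positive half precisely. The claim is really about the possibility of dependence, not its guarantee: a generic choice of $M_\uparrow$, $\mathcal{U}$ and AGG yields nonzero sensitivity, but degenerate weights could kill it. I will therefore phrase the result as ``there exist parameters (indeed, generic ones) for which $\partial\bm{h}^{(2)}_a/\partial\bm{h}^{(0)}_b\neq 0$''. To witness this, I will exhibit linear messages with sum aggregation and an injective update. I also need to fix the convention that ``element'' means a boundary 1-cell or 0-cell of $\ell_e$, so that the distance hypothesis $d>L$ and the incidence to $x^2_e$ refer to the same objects.
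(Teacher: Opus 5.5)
Your proposal is correct and follows the same route as the paper's proof. The negative half is the one-hop-per-layer receptive-field argument, and the positive half is a boundary-to-coface update followed by a coface-to-boundary update through the shared 2-cell, which takes constant depth. Your version is more explicit than the paper's short sketch: you run the induction on the dependence set, you realize the coface-to-boundary step through $\bm{h}_\delta$ entering $M_{\uparrow}$, and you read ``can exchange'' as the existence of parameters with nonzero sensitivity, but the underlying argument is the same.
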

\begin{proof}
Local pairwise message passing expands a receptive field by at most one graph hop per layer. Hence two elements at distance greater than $L$ cannot influence one another after $L$ layers. After lifting $R$ to a 2-cell, each boundary element is connected to the shared coface through an incidence relation. A boundary-to-coface update followed by a coface-to-boundary update therefore communicates across the region in constant depth. This is a representational shortcut rather than a correctness guarantee; a suitably constructed hypergraph can provide a related shortcut, while the cellular representation additionally retains graded boundary relations.
\end{proof}

\begin{proposition}[Incidence-consistency energy]
For a boundary-closed candidate $\mathcal{Y}$, let $H(\mathcal{Y})$ be its directed Hasse incidence graph and let the normalized coface-to-boundary residual be
\begin{equation}
 (\delta_q \bm h)_{\sigma,\tau}
 =
 \widehat{\mathcal{C}^{q}_{\sigma\rightarrow\tau}(\bm h_\sigma)}
 - \widehat{\bm h_\tau},
 \qquad \tau\prec\sigma.
\end{equation}
Then the cosine inconsistency in Eq.~(8) is a weighted quadratic incidence energy because $1-\cos(\bm a,\bm b)=\frac{1}{2}\|\widehat{\bm a}-\widehat{\bm b}\|_2^2$. In the quadratic implementation, the same construction has the form
\begin{equation}
 E_q(\mathcal{Y},\bm h)
 = \|W_q^{1/2}\delta_q\bm h\|_2^2
 = \bm h^\top L_q\bm h,
 \qquad
 L_q=\delta_q^\top W_q\delta_q.
\end{equation}
Consequently, low energy indicates proximity to assignments that agree across the selected incidences.
\end{proposition}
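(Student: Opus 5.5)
The plan is to establish the cosine identity first, then assemble the per-incidence residuals into a single linear coboundary-type operator on the stacked embeddings, and finally read off the quadratic form.

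\textbf{Step 1: the cosine identity.} For nonzero vectors $\bm a,\bm b$ write $\widehat{\bm a}=\bm a/\|\bm a\|_2$ and $\widehat{\bm b}=\bm b/\|\bm b\|_2$. Expanding $\|\widehat{\bm a}-\widehat{\bm b}\|_2^2=\|\widehat{\bm a}\|^2+\|\widehat{\bm b}\|^2-2\langle\widehat{\bm a},\widehat{\bm b}\rangle=2-2\cos(\bm a,\bm b)$ gives the stated identity. Applying it to each term of Eq.~(8) with $\bm a=\mathcal{C}^{q}_{\sigma\rightarrow\tau}(\bm h_\sigma)$ and $\bm b=\bm h_\tau$ yields
\[
\epsilon_q(\sigma)=\frac{1}{2|\partial\sigma|}\sum_{\tau\prec\sigma}\|(\delta_q\bm h)_{\sigma,\tau}\|_2^2 .
\]
Summing over $\sigma\in\mathcal{Y}$ then gives a sum of squared residuals over the edges $(\sigma,\tau)$ of the Hasse graph $H(\mathcal{Y})$. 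The edge weights are $w_{\sigma\tau}=1/(2|\partial\sigma|)$, with a further factor $1/|\mathcal{Y}|$ if one matches $\mathcal{L}_{\mathrm{CI}}$. Cells with empty boundary contribute nothing, consistent with the convention in Eq.~(8). Boundary closure of $\mathcal{Y}$ is used only to ensure that every $\tau\prec\sigma$ with $\sigma\in\mathcal{Y}$ is itself a vertex of $H(\mathcal{Y})$. This is what makes the residual indexable and the energy a function of $\bm h|_{\mathcal{Y}}$ alone.

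\textbf{Step 2: the quadratic implementation.} I interpret ``quadratic implementation'' as the setting in which normalization is dropped, or treated as fixed, and each corestriction is a fixed linear map, i.e.\ the gate is frozen at a given $q$. Then $(\delta_q\bm h)_{\sigma,\tau}=\mathcal{C}^{q}_{\sigma\rightarrow\tau}\bm h_\sigma-\bm h_\tau$ is linear in the stacked vector $\bm h=(\bm h_\sigma)_{\sigma\in\mathcal{Y}}$. It can therefore be written as a block matrix $\delta_q$ with block $\mathcal{C}^q_{\sigma\rightarrow\tau}$ in row $(\sigma,\tau)$, column $\sigma$, and block $-I$ in column $\tau$. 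Let $W_q$ be the block-diagonal matrix carrying $w_{\sigma\tau}I$. Then $E_q=\sum w_{\sigma\tau}\|(\delta_q\bm h)_{\sigma\tau}\|^2=\|W_q^{1/2}\delta_q\bm h\|^2=\bm h^\top\delta_q^\top W_q\delta_q\bm h$, which gives $L_q=\delta_q^\top W_q\delta_q$. This matrix is symmetric positive semidefinite by construction, i.e.\ a cosheaf/sheaf Laplacian in the sense of the cited sheaf literature.

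\textbf{Step 3: the consequence.} Since $L_q\succeq 0$, we have $E_q=0$ exactly when $\bm h\in\ker\delta_q$, which is the space of assignments agreeing along every selected incidence (``global sections''). For the quantitative statement, decompose $\bm h=\bm h_0+\bm h_\perp$ with $\bm h_0\in\ker L_q$. This gives $E_q\ge\lambda_{\min}^{+}(L_q)\,\|\bm h_\perp\|^2$, so small energy forces $\bm h$ to be close to a consistent assignment. In the cosine case the same argument applies to the normalized vectors.

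\textbf{Main obstacle.} I expect the main difficulty to be in Step 2 and Step 3 for the actual normalized, sigmoid-gated residual. That map is not linear in $\bm h$, so the equality $E_q=\bm h^\top L_q\bm h$ holds only in the stated quadratic implementation, or as a local linearization. I would state this explicitly and prove the ``weighted quadratic incidence energy'' claim for the cosine version only in terms of the normalized residual vectors. The Laplacian form would be reserved for the linear corestriction case.
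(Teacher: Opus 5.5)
Your proposal is correct and follows the paper's own inline justification: apply $1-\cos(\bm a,\bm b)=\frac{1}{2}\|\widehat{\bm a}-\widehat{\bm b}\|_2^2$ termwise to Eq.~(8), assert the form $\|W_q^{1/2}\delta_q\bm h\|_2^2=\bm h^\top L_q\bm h$ for the quadratic implementation, and state the ``consequently'' informally. Your explicit block construction of $\delta_q$, the weights $w_{\sigma\tau}=1/(2|\partial\sigma|)$, and the spectral-gap bound $E_q\ge\lambda_{\min}^{+}(L_q)\|\bm h_\perp\|^2$ fill in what the paper leaves unstated, and you are right that this bound holds only for linear corestrictions without normalization, since the normalized gated residual is not linear in $\bm h$.
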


\begin{proposition}[Closure prevents consistency by omission]
Let $S$ be a selected set of cells, $\operatorname{cl}(S)$ its boundary closure, and $I(S)$ the set of coface--boundary incidences whose two cells are both contained in $S$. With non-negative incidence weights and per-incidence residuals $e_q(\sigma,\tau)\geq 0$,
\begin{equation}
E_q(\operatorname{cl}(S),\bm h)
=E_q(S,\bm h)+
\sum_{(\sigma,\tau)\in I(\operatorname{cl}(S))\setminus I(S)}
w_{\sigma\tau}e_q(\sigma,\tau)
\geq E_q(S,\bm h).
\end{equation}
Thus closure forces the objective to evaluate newly included incidences between retained cofaces and their required boundary cells, rather than treating the added cells as an independent subcomplex. Moreover, let $\mathcal{Y}^{\star}$ be a reachable closed reference workflow and $\mathcal{Y}$ a competing closed workflow. If the competitor's relevance advantage is at most $\rho$ while its excess incidence inconsistency is at least $\mu$, then for $J_q(\mathcal{Y})=\bar r_q(\mathcal{Y})-\bar\epsilon_q(\mathcal{Y})$,
\begin{equation}
J_q(\mathcal{Y}^{\star})-J_q(\mathcal{Y})\geq \mu-\rho.
\end{equation}
Hence $\mu>\rho$ is a sufficient conditional margin for preferring the incidence-compatible workflow. This criterion explains consistency and closure only under its stated assumptions.
\end{proposition}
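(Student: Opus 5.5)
The statement has two parts: a monotonicity identity for the incidence energy under boundary closure, and a margin inequality for the objective $J_q$. I will treat them separately.

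\textbf{Closure identity.} I take the quadratic form of the previous proposition as the definition of the energy:
\[
E_q(S,\bm h)=\sum_{(\sigma,\tau)\in I(S)} w_{\sigma\tau}\,e_q(\sigma,\tau),
\]
where $e_q(\sigma,\tau)$ is the normalized residual $\tfrac12\|(\delta_q\bm h)_{\sigma,\tau}\|_2^2$. This equals the cosine residual $1-\cos$. The key observation is that $I(\cdot)$ is monotone under inclusion. Since $S\subseteq\operatorname{cl}(S)$, any incidence with both endpoints in $S$ also has both endpoints in $\operatorname{cl}(S)$, so $I(S)\subseteq I(\operatorname{cl}(S))$. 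I then split the sum over $I(\operatorname{cl}(S))$ into the part over $I(S)$ and the part over $I(\operatorname{cl}(S))\setminus I(S)$. Linearity of the sum gives the displayed equality. Non-negativity of $w_{\sigma\tau}$ and $e_q$ gives the inequality.

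For the interpretive sentence, I would note that every incidence $(\sigma,\tau)$ with $\sigma\in S$ and $\tau\in\partial\sigma\setminus S$ lies in the difference set. These are exactly the incidences that an unclosed selection skips, which is the sense of ``consistency by omission.'' The one care point is the weighting. The paper's $\epsilon_q$ averages over boundary cells with weight $1/|\partial\sigma|$, and I would choose $w_{\sigma\tau}=1/|\partial\sigma|$ so that $E_q=\sum_\sigma\epsilon_q(\sigma)$.

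\textbf{Margin inequality.} I write
\[
J_q(\mathcal Y^\star)-J_q(\mathcal Y)=\bigl(\bar r_q(\mathcal Y^\star)-\bar r_q(\mathcal Y)\bigr)+\bigl(\bar\epsilon_q(\mathcal Y)-\bar\epsilon_q(\mathcal Y^\star)\bigr).
\]
I interpret the hypotheses as $\bar r_q(\mathcal Y)-\bar r_q(\mathcal Y^\star)\le\rho$ and $\bar\epsilon_q(\mathcal Y)-\bar\epsilon_q(\mathcal Y^\star)\ge\mu$, both stated for the averaged quantities appearing in Eq.~(9). Then the first bracket is at least $-\rho$ and the second is at least $\mu$, which gives the bound $\mu-\rho$. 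If $\mu>\rho$, the difference is strictly positive, so the argmax over closed candidates cannot be $\mathcal Y$.

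\textbf{Main obstacle.} The mathematics is elementary. The main obstacle is making the definitions precise enough that the claims are literally true. I would state explicitly that $\rho$ and $\mu$ refer to the per-cell averages in $J_q$, and that $E_q$ uses the weights $w_{\sigma\tau}=1/|\partial\sigma|$. I would also remark that the closure inequality concerns the total energy, not the average. The average $\bar\epsilon_q$ need not increase under closure, because the added boundary cells may carry zero residual (for example, $0$-cells) and so dilute the mean. This is why the statement is phrased for $E_q$.
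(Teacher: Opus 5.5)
Your argument is correct. The paper states this proposition without giving any proof, and your route ($I(S)\subseteq I(\operatorname{cl}(S))$ by monotonicity, splitting the sum over $I(\operatorname{cl}(S))$, non-negativity of weights and residuals, then the two-bracket decomposition of $J_q$ under the averaged hypotheses) is exactly the direct verification the statement presupposes.

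One side remark needs a qualification: the claim that $w_{\sigma\tau}=1/|\partial\sigma|$ gives $E_q=\sum_\sigma\epsilon_q(\sigma)$. Because Eq.~(8) averages over all of $\partial\sigma$ in $\mathcal{X}$, this holds for boundary-closed sets such as $\operatorname{cl}(S)$. For an unclosed $S$ under your $I(S)$-restricted energy it fails; there you only get $E_q(S,\bm h)\le\sum_{\sigma\in S}\epsilon_q(\sigma)$. Your proof of the displayed identity never relies on this identification, so the argument stands.
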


\endgroup

\section{Algorithm Details}
\label{algorithm}
Based on the theoretical proofs provided in Section \ref{app.proof.fundamental_cycles}, we detail the computational procedure for lifting the task graph into a cellular complex. Algorithm \ref{alg:fundamental_cycles} outlines the process of extracting the fundamental cycle basis using a spanning tree approach. This method guarantees a linear number of cycles relative to the cyclomatic number $\beta_1$, ensuring the scalability discussed in Section \ref{app:complexity}.
\begin{algorithm}[H]
\caption{Fundamental Cycle Basis Construction via Spanning Tree}
\label{alg:fundamental_cycles}
\begin{algorithmic}[1]
\REQUIRE {\color{black}Connected underlying undirected graph $G_u=(V,E)$; optional maximum length $K_{max}$}
\ENSURE {\color{black}Fundamental cycle set $\mathcal{C}$ (a complete basis when $K_{max}$ is None)}

\STATE \textbf{Initialize:} Disjoint Set Union (DSU) structure for cycle detection.
\STATE $\mathcal{T} \leftarrow \emptyset$ \COMMENT{Initialize Spanning Tree edges}
\STATE $E_{non} \leftarrow \emptyset$ \COMMENT{Initialize Non-Tree edges}
\STATE $\mathcal{C} \leftarrow \emptyset$ \COMMENT{Initialize Cycle Basis}

\STATE \COMMENT{\textbf{Phase 1: Build Spanning Tree (Kruskal-like logic)}}
\FOR{each edge $(u, v) \in E$}
    \IF{$\textsc{FindSet}(u) \neq \textsc{FindSet}(v)$}
        \STATE $\mathcal{T} \leftarrow \mathcal{T} \cup \{(u, v)\}$
        \STATE $\textsc{UnionSet}(u, v)$
    \ELSE
        \STATE $E_{non} \leftarrow E_{non} \cup \{(u, v)\}$
    \ENDIF
\ENDFOR

\STATE \COMMENT{\textbf{Phase 2: Extract Fundamental Cycles}}
\STATE Construct adjacency list for $\mathcal{T}$ to enable path traversal.
\FOR{each non-tree edge $e=(u, v) \in E_{non}$}
    \STATE \COMMENT{Find the unique path in $\mathcal{T}$ connecting $u$ and $v$}
    \STATE $P_{\mathcal{T}}(u, v) \leftarrow \textsc{BFS}(start=u, target=v, graph=\mathcal{T})$
    
    \STATE \COMMENT{Form the cycle by combining the non-tree edge and the tree path}
    \STATE $C_e \leftarrow P_{\mathcal{T}}(u, v) \cup \{e\}$
    
    \IF{$K_{max}$ is None \OR $|C_e| \leq K_{max}$}
        \STATE $\mathcal{C} \leftarrow \mathcal{C} \cup \{C_e\}$
    \ENDIF
\ENDFOR

\RETURN $\mathcal{C}$
\end{algorithmic}
\end{algorithm}
{\color{black}The homology and basis claims use the untruncated setting $K_{max}=\mathrm{None}$. With a finite $K_{max}$, the algorithm returns only the retained length-filtered subset, which is not claimed to span the full cycle space.}
\section{Data Construction and Prompt Details}
\label{app:prompts}

To ensure reproducibility and facilitate future research, we describe the topology-guided data construction workflow and provide the prompt template used by TopoPlanner during inference. The data extension is not produced by a one-shot generation prompt. Instead, it is generated by executable scripts developed through an agentic code-writing workflow, following the idea of code-action agents~(\cite{wang2024executable}) and agentic data construction~(\cite{mitra2024agentinstruct}).

\subsection{Agentic Topology-Guided Construction Workflow}
\label{app:data_workflow}
{\color{black}Given human-specified workflow-topology requirements, Codex is used as a code-executing agent to implement and iterate the data construction pipeline. Algorithm~\ref{alg:data_construction} summarizes the process: it mines valid motifs from the original tool graph, instantiates samples, generates requests under fixed topology constraints, validates them, and creates leakage-controlled splits.}

\paragraph{Structure constraints.}
The construction code treats the original tool graph as the source of structural validity. For each dataset, it loads the tool inventory and directed dependency links, builds adjacency lists and an edge lookup table, and samples candidate motifs only from this graph. We use three concrete paradigms: \textit{{\color{black}directed loop}}, such as \(A\rightarrow B\rightarrow C\rightarrow A\); \textit{merge}, where two dependency paths converge to a shared downstream tool, such as \(A\rightarrow C\) and \(A\rightarrow B\rightarrow C\); and \textit{{\color{black}loop-merge}}, where a merge structure is further connected by a closing path back to an earlier tool. A candidate is accepted only if all of its workflow dependencies are present in the original edge set, all task nodes belong to the tool inventory, and the resulting task graph satisfies the intended topological pattern. {\color{black}Thus, topology is fixed before request generation.}

\paragraph{Codex workflow instructions.}
Codex is not prompted to directly invent final samples. Instead, it receives module-level implementation instructions and iteratively edits executable construction programs. The main instructions used in the workflow are summarized below:
\begin{itemize}
    \item {\color{black}\textbf{Motif mining:} ``Sample valid directed-loop, merge, and loop-merge motifs from each tool graph.''}
    \item \textbf{Sample construction:} ``For each valid motif, construct task nodes, task links, and task steps in the original dataset schema. Record motif statistics for later inspection.''
    \item {\color{black}\textbf{Request generation:} ``Generate a natural request from the fixed, hidden tool topology.''}
    \item {\color{black}\textbf{Validation and merging:} ``Validate tools, edges, node-link consistency, identifiers, and motifs; repair or reject errors before merging leakage-controlled splits.''}
\end{itemize}

\begin{algorithm}[h]
\caption{Agentic Topology-Guided Data Construction}
\label{alg:data_construction}
\begin{algorithmic}[1]
\REQUIRE Dataset collection $\mathcal{D}$, target counts $\{N_d\}$, topology types $\mathcal{C}$
\ENSURE Extended datasets with train/test splits
\FOR{each dataset $d \in \mathcal{D}$}
    \STATE Load tool graph $G_d=(\mathcal{T}_d,\mathcal{E}_d)$ and tool descriptions
    \STATE Build adjacency lists and valid edge lookup from $\mathcal{E}_d$
    \STATE $\mathcal{S}_d \leftarrow \emptyset$
    \WHILE{$|\mathcal{S}_d| < N_d$}
        \STATE Sample a topology type $c \in \mathcal{C}$
        \STATE Mine a candidate motif $m$ of type $c$ from $G_d$
        \IF{$m$ is not structurally valid under $\mathcal{E}_d$}
            \STATE continue
        \ENDIF
        \STATE Instantiate $m$ as task nodes, task links, and task steps
        \STATE Generate a user request conditioned on tools, descriptions, and motif $m$
        \STATE Form candidate sample $s$
        \IF{\textsc{ValidateSample}$(s,\mathcal{T}_d,\mathcal{E}_d,c)$ accepts}
            \STATE $\mathcal{S}_d \leftarrow \mathcal{S}_d \cup \{s\}$
        \ENDIF
    \ENDWHILE
    \STATE Group samples by request, task nodes, and task links
    \STATE Merge $\mathcal{S}_d$ into the original dataset and assign leakage-controlled splits
\ENDFOR
\RETURN Extended datasets
\end{algorithmic}
\end{algorithm}

\paragraph{Validation algorithm.}
The final validator is deterministic and is applied after request generation. Algorithm~\ref{alg:data_validation} summarizes the logic.

\begin{algorithm}[h]
\caption{Topology-Guided Sample Validation}
\label{alg:data_validation}
\begin{algorithmic}[1]
\REQUIRE Candidate sample $s$, tool set $\mathcal{T}$, valid edge set $\mathcal{E}$, target topology type $c$
\ENSURE Accept or reject $s$
\STATE Extract task nodes $V_s$ and task links $E_s$ from $s$
\IF{any node in $V_s$ is not in $\mathcal{T}$}
    \RETURN Reject
\ENDIF
\IF{any link in $E_s$ is not in $\mathcal{E}$}
    \RETURN Reject
\ENDIF
\IF{task steps, task nodes, and task links are structurally inconsistent}
    \RETURN Reject
\ENDIF
\IF{$s$ does not satisfy the target topological pattern $c$}
    \RETURN Reject
\ENDIF
\IF{$s$ duplicates an existing identifier or an existing workflow key}
    \RETURN Repair or reject
\ENDIF
\RETURN Accept
\end{algorithmic}
\end{algorithm}

\subsection{Human Evaluation Protocol}
\label{app:human_eval}
We further conduct a human evaluation to assess the quality of the topology-guided data extension. Following the evaluation dimensions used in TaskBench~(\cite{shen2024taskbench}), each sample is rated from 1 to 5 along three axes: \textbf{Naturalness}, which measures whether the user request is fluent and realistic; \textbf{Complexity}, which measures whether the task requires non-trivial multi-step tool use; and \textbf{Alignment}, which measures whether the request semantics match the annotated task graph.

To reduce single-perspective bias, we invite three human evaluators with different backgrounds: one task-planning researcher who focuses on structural correctness, one domain practitioner who focuses on tool semantics, and one user-facing reviewer who focuses on request naturalness. For each dataset, evaluators inspect a randomly sampled subset of the augmented Cycle examples together with their task nodes and task links. The final score is the average over evaluators and sampled examples. {\color{black}These values are descriptive expert ratings rather than inferential population estimates.} This expert assessment is used only to rate benchmark instances; it does not involve crowdsourcing, collection of personal data, or interventions on participants. The results are reported in Table~\ref{tab:evaluation}.

\begin{table}[!t]
    \centering
    \small
    \setlength{\tabcolsep}{3pt}
    \begin{tabular}{ccccc}
    \toprule
    \textbf{Dataset} & \textit{Naturalness}~$\uparrow$ & \textit{Complexity}~$\uparrow$ & \textit{Alignment}~$\uparrow$ & \textit{Overall}~$\uparrow$ \\
    \midrule
    \textbf{HuggingFace} & 4.28 & 4.34 & 4.16 & 4.26 \\
    \textbf{Multimedia} & 4.35 & 4.29 & 4.22 & 4.29 \\
    \textbf{Daily Life} & 4.06 & 4.21 & 3.88 & 4.05 \\
    \bottomrule
    \end{tabular}
    \caption{{\color{black}Human ratings (1--5) for the topology-guided data extension.}}
    \label{tab:evaluation}
\end{table}

\subsection{Measures to Prevent Data Leakage}
\label{app:leakage}
We strictly isolate the training and evaluation phases to prevent data contamination:
\begin{itemize}
    \item \textbf{Offline construction:} The data extension is performed before model training and evaluation. No generation-time construction state is exposed to the evaluated planner.
    \item \textbf{Grouped splitting:} Augmented samples are grouped by normalized request, ordered task nodes, and dependency links before train-test assignment. This reduces the risk that near-duplicate workflow graphs appear across training and testing splits.
    \item \textbf{Format preservation:} The augmented samples are merged into the original dataset schema, and all evaluated methods use the same updated split.
\end{itemize}

\subsection{Model Overview and API-Compatible Interface}
\label{app:model_and_api_interface}

Figure~\ref{fig:model} in the main text summarizes the local open-source TopoPlanner pipeline. Here we clarify the API deployment mode. When the backbone is accessed only through an API, the model exposes text completions but not token embeddings, so the local graph-token interface in the main model cannot be applied directly. We therefore keep the same cellular lifting, retrieval, and multi-dimensional reasoning modules locally, use the API LLM only for textual proposal generation, and let the local topology module perform the final topology-guided selection and graph-consistent decoding.

\begin{figure*}[t]
    \centering
    \includegraphics[width=\textwidth]{Pics/model_topo_api.png}
    \caption{{\color{black}API-compatible TopoPlanner with local topology-guided proposal selection.}}
    \label{fig:model_api}
\end{figure*}

\subsection{TopoPlanner Inference Prompts}
\label{app:inf_prompts}

\paragraph{Local graph-token prompt.}
For local open-source LLMs, TopoPlanner follows the same output interface as the evaluated tool-planning benchmarks: the LLM produces one tool name per line, and dependency edges are reconstructed by the shared evaluation protocol. Given a request \(q\), TopoPlanner retrieves a closed cellular subcomplex \(\mathcal{X}^{*}_q\), encodes it into a graph token \(\bm{h}_{\mathcal{X}_q}\), and injects it at the \texttt{[graph]}...\texttt{[/graph]} position. The textual part of the prompt is:

\begin{center}
\fbox{
  \begin{minipage}{0.9\linewidth}
    \small
    \textbf{Local Planning Prompt:} \\
    \# GOAL \#: Please use the provided tools to solve the problem. \\
    \\
    \# GRAPH CONTEXT \#: \texttt{[graph]} \(\bm{h}_{\mathcal{X}_q}\) \texttt{[/graph]} \\
    \\
    \# TASK LIST \#: \{tool\_list\} \\
    \# USER REQUEST \#: \{user\_request\} \\
    \\
    \# OUTPUT FORMAT \#: Output only one exact tool name per line: \\
    Tool1: tool\_name \\
    Tool2: tool\_name
  \end{minipage}
}
\end{center}

\paragraph{API-accessible LLMs.}
For Qwen3-235B and DeepSeek-V3, the interface is text-only and does not expose token embeddings or support replacing a placeholder with \(\bm{h}_{\mathcal{X}_q}\). We therefore do not claim graph-token injection for these models. Instead, the API LLM predicts textual task steps, candidate tools, and coarse task links from the plain-text tool list and user request, while the local lifted cellular encoder performs topology-guided tool scoring and graph-consistent decoding on top of these proposals, as illustrated in Figure~\ref{fig:model_api}. We also do not include GTool as an API baseline, because its graph-enhanced formulation likewise assumes direct model-side graph/embedding injection, which is unavailable under the same black-box API constraint.

\section{Additional Results on Large LLMs}
\label{app:large_llm_results}
Table~\ref{tab:large_llm_result} reports the main results on API-accessible large LLMs, including Qwen3-235B and DeepSeek-V3-671B, using the API interface described above. These numbers should therefore be interpreted as gains from topology-guided proposal selection and decoding over black-box textual outputs, rather than as hidden-state graph-token injection into the API models themselves.

\begin{table*}[!t]
    \centering
    \tiny
    \renewcommand{\arraystretch}{0.75}

    \resizebox{\textwidth}{!}{%
     \begin{tabular}{cc|ccc|ccc|ccc}
     \toprule
     
     \rowcolor{COLOR_MEAN} &   & \multicolumn{3}{c|}{\textbf{HuggingFace}} & \multicolumn{3}{c|}{\textbf{Multimedia}} & \multicolumn{3}{c}{\textbf{Daily Life}} \\
     
     \rowcolor{COLOR_MEAN}  \multirow{-2}{*}{\textbf{LLM}}  & \multirow{-2}{*}{\textbf{Method}} &  \textit{n-F1}  $\uparrow$ & \textit{l-F1}  $\uparrow$ & \textit{Acc} $\uparrow$ & \textit{n-F1}  $\uparrow$ & \textit{l-F1}  $\uparrow$ & \textit{Acc} $\uparrow$ &  \textit{n-F1}  $\uparrow$ & \textit{l-F1}  $\uparrow$ & \textit{Acc} $\uparrow$  \\ 

     \midrule
      \multirow{6}{*}{\textbf{Qwen3-235B}} 
        & Direct & 73.63 & 47.77 & 23.82 & 77.10 & 57.49 & 48.29 & 83.96 & 62.88 & 67.36 \\
        & TalkGraph & 60.03 & 28.59 & 10.00 & 62.06 & 39.30 & 22.32 & 78.10 & 57.44 & 50.48 \\
        & Plan-over-Graph  & 62.54 & 31.59 & 9.85 & 66.17 & 42.03 & 22.93 & 80.73 & 59.22 & 58.60 \\
        & GNN4TaskPlan & 61.99 & 31.25 & 11.91 & 68.45 & 47.04 & 30.12 & 78.10 & 57.44 & 50.48 \\
        & {\color{black}USPilot} & 70.48 & 44.64 & 21.03 & 68.28 & 46.33 & 31.22 & 73.74 & 50.78 & 41.08 \\
        & \cellcolor{blue!10}TopoPlanner 
        & \cellcolor{blue!10}\textbf{81.40} & \cellcolor{blue!10}\textbf{60.56} & \cellcolor{blue!10}\textbf{44.74} 
        & \cellcolor{blue!10}\textbf{87.65} & \cellcolor{blue!10}\textbf{71.00} & \cellcolor{blue!10}\textbf{61.33} 
        & \cellcolor{blue!10}\textbf{97.95} & \cellcolor{blue!10}\textbf{83.16} & \cellcolor{blue!10}\textbf{89.65} \\    

        \midrule

      \multirow{6}{*}{\textbf{DeepSeek-V3-671B}} 
        & Direct & 76.42 & 53.82 & 27.65 & 77.61 & 56.09 & 47.80 & 82.76 & 47.15 & 67.01 \\
        & TalkGraph & 63.50 & 35.80 & 10.88 & 64.43 & 40.39 & 24.76 & 77.02 & 55.82 & 49.66 \\
        & Plan-over-Graph  & 67.33 & 39.84 & 13.82 & 68.89 & 44.72 & 25.98 & 79.36 & 43.44 & 58.25 \\
        & GNN4TaskPlan & 65.32 & 37.34 & 12.65 & 70.06 & 47.47 & 31.83 & 77.02 & 55.82 & 49.66 \\
        & {\color{black}USPilot} & 72.01 & 47.89 & 23.53 & 69.07 & 45.51 & 30.37 & 73.31 & 49.93 & 41.41 \\
        & \cellcolor{blue!10}TopoPlanner 
        & \cellcolor{blue!10}\textbf{82.21} & \cellcolor{blue!10}\textbf{59.86} & \cellcolor{blue!10}\textbf{45.79} 
        & \cellcolor{blue!10}\textbf{88.08} & \cellcolor{blue!10}\textbf{69.19} & \cellcolor{blue!10}\textbf{61.17} 
        & \cellcolor{blue!10}\textbf{98.33} & \cellcolor{blue!10}\textbf{82.95} & \cellcolor{blue!10}\textbf{91.43} \\   
      \bottomrule
    \end{tabular}
    }
    \vspace{-0.5em}
      \caption{{\color{black}API-backbone results with the API-compatible topology interface.}}
    \label{tab:large_llm_result}
    \vspace{-1.5em}
\end{table*}

\section{Limitations, Broader Impacts, and Asset Use}
\label{app:limitations_broader}

\paragraph{Limitations.}
{\color{black}Our empirical evaluation covers four benchmark-style tool-planning datasets; unconstrained real-world deployments remain unevaluated. The lifting mechanism targets loop, merge, and loop-merge workflow regions, and \(\lambda_{CI}\) is selected per dataset. In the black-box API setting, TopoPlanner uses topology-guided proposal selection and decoding because graph-token injection is unavailable.}

\paragraph{Broader impacts.}
{\color{black}Topology-consistent planning can improve dependency preservation in workflows with feedback, verification, or branch merging. When connected to external APIs or action tools, the same capability can increase execution risk; the released artifacts therefore comprise benchmark data and research code rather than a turnkey autonomous agent.}

\paragraph{Asset attribution and licensing.}
The paper builds on existing benchmarks, backbone models, and baseline implementations, all of which are credited in the main text and bibliography. {\color{black}The released repository preserves attribution and provides access notes for the licenses and terms governing TaskBench, ToolBench, and the backbone models or APIs.}

\begingroup
\color{black}
\makeatletter
\let\papercolor@floatboxreset\@floatboxreset
\def\@floatboxreset{\papercolor@floatboxreset\color{black}}
\makeatother
\captionsetup[table]{labelfont={color=black},textfont={color=black}}
\section{Additional Evaluation Diagnostics}
\label{app:additional_diagnostics}

{\color{black}This section complements the main experiments with diagnostics of benchmark validity, cycle semantics, component effects, structural accuracy, transfer, and robustness. Unless stated otherwise, the analyses use the same fixed test examples and evaluation definitions as the main experiments; they characterize TopoPlanner in this controlled setting rather than establish runtime correctness in an unconstrained open-world environment.}

\subsection{Benchmark Validity Beyond Topology-Guided Augmentation}
\label{app:benchmark_validity}

\paragraph{Motif-leakage audit.}
We group augmented examples by normalized request and exact ordered node--link structure before splitting. Table~\ref{tab:motif_leakage} reports no repeated normalized construction key between train and test. {\color{black}A few graph-only overlaps remain in HuggingFace and Multimedia because different requests can instantiate the same valid motif; none occur in Daily Life.}

\begin{table}[h]
\centering
\small
\setlength{\tabcolsep}{4pt}
\begin{tabular}{lrrr}
\toprule
Domain & Aug. test & Repeated key & Graph-only overlap \\
\midrule
HuggingFace & 300 & 0 & 21 \\
Multimedia & 220 & 0 & 13 \\
Daily Life & 160 & 0 & 0 \\
\bottomrule
\end{tabular}
\caption{{\color{black}Motif-leakage audit on augmented TaskBench domains.}}
\label{tab:motif_leakage}
\end{table}

\paragraph{Held-out topology.}
{\color{black}We evaluate held-out structure in one controlled setting: a merge topology absent from the corresponding training split. Table~\ref{tab:heldout_merge} reports TopoPlanner and its paired improvement over GTool on the same HuggingFace/Vicuna-7B examples, with confidence intervals obtained by paired bootstrap over test examples. This evidence is specific to the tested setting and does not establish generalization to every unseen workflow topology.}

\begin{table}[h]
\centering
\small
\begin{tabular}{lcc}
\toprule
Metric & TopoPlanner & Improvement over GTool (95\% CI) \\
\midrule
N-F1 & 0.787 & $+0.266$ $[0.240,0.292]$ \\
L-F1 & 0.491 & $+0.306$ $[0.273,0.339]$ \\
Exact match & 0.436 & $+0.352$ $[0.306,0.400]$ \\
Precedence accuracy & 0.591 & $+0.358$ $[0.321,0.397]$ \\
\bottomrule
\end{tabular}
\caption{{\color{black}Held-out merge-topology results on HuggingFace/Vicuna-7B.}}
\label{tab:heldout_merge}
\end{table}

\paragraph{Original non-augmented workflows.}
To test whether the observed gains are confined to topology-guided evaluation examples, Table~\ref{tab:original_workflows} evaluates original DAG workflows and the subset whose undirected workflow graph has positive cycle rank. The latter contains 47 of 837 HuggingFace DAGs and 12 of 565 Multimedia DAGs. These examples come from the source benchmarks rather than the augmented loop/merge construction. This diagnostic evaluates performance on original examples; by itself, it does not isolate the effect of using augmented examples during training.

\begin{table*}[h]
\centering
\scriptsize
\setlength{\tabcolsep}{4pt}
\begin{tabular}{llcc|cc}
\toprule
& & \multicolumn{2}{c|}{Original DAG (N-F1/L-F1)} & \multicolumn{2}{c}{Original topology-bearing subset (N-F1/L-F1)} \\
Domain & Backbone & GTool & TopoPlanner & GTool & TopoPlanner \\
\midrule
HuggingFace & Vicuna-7B & 48.80/10.78 & 84.38/48.00 & 53.76/17.13 & 79.60/49.50 \\
HuggingFace & CodeLlama-7B & 66.81/25.89 & 82.49/47.11 & 73.06/41.10 & 82.43/54.75 \\
Multimedia & Vicuna-7B & 57.79/27.29 & 82.31/49.01 & 48.28/15.11 & 70.22/40.22 \\
Multimedia & CodeLlama-7B & 76.85/43.52 & 84.45/50.00 & 72.28/35.44 & 83.23/51.64 \\
\bottomrule
\end{tabular}
\caption{{\color{black}Results on original workflows (\%).}}
\label{tab:original_workflows}
\end{table*}

\subsection{Executable Interpretation of Workflow Cycles}
\label{app:cycle_semantics}

The \textit{Cycle} category groups loop, merge, and loop-merge motifs because each induces a closed cycle-level region in the lifted cellular representation. A merge is represented as a cycle-level cell because its multiple dependency paths bound a closed region; this structural notion does not imply repeated execution. For motifs containing a directed loop, the loop represents bounded feedback, revision, reuse, or finite transformation at the workflow-graph level, rather than mutually blocking preconditions or an instruction to execute indefinitely. We audit all 2,266 loop-bearing requests using non-exclusive semantic cue categories. {\color{black}Requests unmatched by the cue lexicon are manually assigned to finite transformation only if they specify a finite tool-call sequence.}

\begin{table*}[h]
\centering
\small
\setlength{\tabcolsep}{4pt}
\begin{tabular}{lrrrrrr}
\toprule
Subset & \#Req. & Condition/stop & Verify/check & Refine/correct & Finite cyclic & Bounded/finite \\
\midrule
All loop-bearing & 2,266 & 27.89 & 52.07 & 81.16 & 1.99 (45) & 100.00 \\
Directed loop & 1,134 & 12.43 & 43.03 & 81.66 & 3.00 (34) & 100.00 \\
Loop-merge & 1,132 & 43.37 & 61.13 & 80.65 & 0.97 (11) & 100.00 \\
\bottomrule
\end{tabular}
\caption{{\color{black}Bounded/finite semantics audit for loop-bearing requests (\%; cues may overlap).}}
\label{tab:cycle_semantics}
\end{table*}

{\color{black}We also inspect raw generated tool sequences with a diagnostic guard of $\max(8,2\times\text{gold tool count})$. ``Repeat'' flags any repeated tool call, ``over-step'' flags a sequence exceeding the guard, and ``safe stop'' denotes that neither trigger occurs. This offline diagnostic does not replace executor-level stopping.}

\begin{table}[h]
\centering
\small
\begin{tabular}{lrrrrr}
\toprule
Method & Safe stop & Repeat & Over-step & C-M safe & Cycle safe \\
\midrule
GTool & 90.15 & 6.08 & 8.43 & 78.0 & 82.0 \\
TopoPlanner & 99.11 & 0.79 & 0.10 & 96.0 & 99.0 \\
\bottomrule
\end{tabular}
\caption{{\color{black}Offline guarded-stop audit (\%; repeat/over-step $\downarrow$).}}
\label{tab:guarded_stop}
\end{table}

\subsection{Component Diagnostics}
\label{app:component_diagnostics}

Table~\ref{tab:component_diagnostics} extends the main ablation with inference and representation diagnostics. The hypergraph variant tests whether arbitrary higher-order grouping alone explains the gain, while the no-consistency and no-closure variants isolate the two components of the retrieval criterion. The former rows alter the trained architecture or objective, whereas ``no Eq.~9 consistency'' and ``no boundary closure'' are inference-time interventions; the two groups should therefore not be interpreted as directly interchangeable ablations. {\color{black}Across both backbones and datasets, the full model exceeds every tested variant on N-F1, L-F1, and Accuracy, although the effect sizes vary by setting.}

\begin{table*}[h]
\centering
\scriptsize
\setlength{\tabcolsep}{4pt}
\begin{tabular}{llcc}
\toprule
Backbone & Variant & HuggingFace N/L/Acc & Multimedia N/L/Acc \\
\midrule
Vicuna-7B & w/o 2-cell & 75.58/46.61/30.88 & 73.29/50.13/34.15 \\
& w/o retrieval & 76.07/47.88/32.21 & 77.97/55.11/35.37 \\
& w/o $\mathcal{L}_{CI}$ & 80.69/53.31/38.68 & 81.47/58.29/40.98 \\
& no Eq.~9 consistency & 77.39/49.09/33.68 & 80.95/57.03/41.71 \\
& no boundary closure & 75.66/47.07/32.50 & 81.41/57.71/43.17 \\
& hypergraph variant & 79.47/51.66/37.65 & 82.38/59.73/44.63 \\
& TopoPlanner & \textbf{82.23/55.72/42.35} & \textbf{86.80/66.24/54.76} \\
\midrule
CodeLlama-7B & w/o 2-cell & 78.95/51.87/35.29 & 75.90/56.35/43.54 \\
& w/o retrieval & 79.75/56.96/35.59 & 83.89/62.92/49.27 \\
& w/o $\mathcal{L}_{CI}$ & 80.80/53.83/38.82 & 87.12/67.61/53.54 \\
& no Eq.~9 consistency & 79.44/53.19/38.24 & 86.24/64.34/53.41 \\
& no boundary closure & 78.74/52.00/35.59 & 85.59/66.14/55.37 \\
& hypergraph variant & 77.35/50.24/31.32 & 84.20/62.79/49.15 \\
& TopoPlanner & \textbf{83.43/59.62/47.35} & \textbf{89.08/69.59/62.20} \\
\bottomrule
\end{tabular}
\caption{{\color{black}Extended component diagnostics (\%); training and inference variants are not directly comparable.}}
\label{tab:component_diagnostics}
\end{table*}

\subsection{Structure-Specific Evaluation}
\label{app:structure_diagnostics}

\paragraph{Merge and reuse dependencies.}
Merge edges point to nodes with in-degree greater than one, reuse edges leave nodes with out-degree greater than one, and M/R recall is computed on their union. {\color{black}The four-domain 7B subgroup contains 804 examples and 2,325 gold merge/reuse edges for each backbone.} {\color{black}Merge and reuse are not independent cell types; this diagnostic tests regional dependency preservation.}

\begin{table}[h]
\centering
\small
\begin{tabular}{llrrrr}
\toprule
Method & Backbone & $n$ & N-F1 & L-F1 & M/R Rec. \\
\midrule
GTool & Vicuna-7B & 804 & 51.18 & 22.06 & 19.04 \\
TopoPlanner & Vicuna-7B & 804 & 76.16 & 46.92 & 39.72 \\
GTool & CodeLlama-7B & 804 & 64.56 & 35.27 & 29.06 \\
TopoPlanner & CodeLlama-7B & 804 & 78.94 & 49.91 & 41.01 \\
\bottomrule
\end{tabular}
\caption{{\color{black}Merge/reuse subgroup results (\%).}}
\label{tab:merge_reuse}
\end{table}

\paragraph{Execution order.}
For each predicted tool sequence, we check each gold dependency $u\rightarrow v$. Pair measures macro-averaged correctness over gold dependency pairs, with missing endpoints counted as errors. {\color{black}Dep. is an example-level all-or-nothing score requiring every gold dependency's endpoints to be present and ordered correctly.}

\begin{table*}[h]
\centering
\small
\setlength{\tabcolsep}{4pt}
\begin{tabular}{llcc|cc|cc}
\toprule
& & \multicolumn{2}{c|}{HuggingFace} & \multicolumn{2}{c|}{Multimedia} & \multicolumn{2}{c}{Daily Life} \\
Method & Backbone & Dep. & Pair & Dep. & Pair & Dep. & Pair \\
\midrule
GTool & Vicuna-7B & 9.71 & 24.18 & 8.29 & 23.69 & 39.86 & 57.32 \\
TopoPlanner & Vicuna-7B & 27.94 & 51.74 & 40.37 & 64.29 & 60.14 & 79.50 \\
GTool & CodeLlama-7B & 25.74 & 44.91 & 32.56 & 49.73 & 48.11 & 62.30 \\
TopoPlanner & CodeLlama-7B & 33.24 & 58.64 & 48.78 & 71.80 & 63.57 & 83.28 \\
\bottomrule
\end{tabular}
\caption{{\color{black}Execution-order results (\%).}}
\label{tab:execution_order}
\end{table*}

\subsection{Paired Statistical Significance}
\label{app:paired_significance}

We pair TopoPlanner and GTool predictions on the same fixed test instances. N-F1, L-F1, and Accuracy differences are evaluated by example-level paired bootstrap with 10,000 resamples. Table~\ref{tab:paired_significance} reports representative paired differences and 95\% confidence intervals. {\color{black}All listed two-sided paired-bootstrap tests have $p<0.0002$.} {\color{black}These intervals quantify example-level uncertainty for fixed checkpoints, not variation across training runs.}

\begin{table}[H]
\centering
\small
\setlength{\tabcolsep}{4pt}
\begin{tabular}{llccc}
\toprule
Domain & Backbone & N-F1 difference & L-F1 difference & Accuracy difference \\
\midrule
HuggingFace & Vicuna-7B & $+0.298$ $[0.276,0.319]$ & $+0.359$ $[0.331,0.387]$ & $+0.349$ $[0.312,0.387]$ \\
HuggingFace & CodeLlama-7B & $+0.149$ $[0.129,0.169]$ & $+0.201$ $[0.172,0.229]$ & $+0.228$ $[0.188,0.269]$ \\
Multimedia & Vicuna-7B & $+0.392$ $[0.373,0.411]$ & $+0.466$ $[0.440,0.492]$ & $+0.482$ $[0.446,0.517]$ \\
Multimedia & CodeLlama-13B & $+0.162$ $[0.144,0.182]$ & $+0.185$ $[0.162,0.208]$ & $+0.283$ $[0.250,0.317]$ \\
Daily Life & Vicuna-13B & $+0.166$ $[0.144,0.189]$ & $+0.166$ $[0.142,0.189]$ & $+0.301$ $[0.261,0.340]$ \\
Daily Life & CodeLlama-13B & $+0.144$ $[0.124,0.165]$ & $+0.139$ $[0.116,0.162]$ & $+0.261$ $[0.222,0.301]$ \\
\bottomrule
\end{tabular}
\caption{{\color{black}Paired TopoPlanner--GTool differences with 95\% bootstrap CIs (scores in $[0,1]$).}}
\label{tab:paired_significance}
\end{table}

\subsection{Cross-Domain Transfer and Incomplete-Graph Robustness}
\label{app:robustness}

\paragraph{Graph-grounded transfer.}
The source-domain checkpoint is evaluated in a different target domain without target-domain parameter updates. Target-domain tool descriptions and the target dependency graph remain available at inference. This is graph-grounded transfer to unseen-but-described tools and target graphs, not unconstrained discovery of unregistered tools.

\begin{table}[H]
\centering
\scriptsize
\setlength{\tabcolsep}{3pt}
\begin{tabular}{lllccc|ccc|ccc}
\toprule
Backbone & Source & Target & \multicolumn{3}{c|}{Direct} & \multicolumn{3}{c|}{GTool} & \multicolumn{3}{c}{TopoPlanner} \\
& & & N & L & Acc & N & L & Acc & N & L & Acc \\
\midrule
Vicuna-7B & Daily Life & Multimedia & .2897 & .0038 & .0171 & .2288 & .0106 & .0122 & .5120 & .1820 & .0646 \\
Vicuna-7B & HuggingFace & Daily Life & .3200 & .1274 & .0756 & .3195 & .1251 & .0739 & .7436 & .4012 & .2474 \\
Vicuna-7B & Multimedia & HuggingFace & .4218 & .1151 & .0647 & .3576 & .0766 & .0294 & .4571 & .1736 & .0676 \\
CodeLlama-7B & Daily Life & Multimedia & .4963 & .0527 & .1122 & .3893 & .1266 & .0268 & .5717 & .2905 & .1305 \\
CodeLlama-7B & HuggingFace & Daily Life & .6844 & .4362 & .3265 & .6759 & .4214 & .3247 & .8955 & .6407 & .6065 \\
CodeLlama-7B & Multimedia & HuggingFace & .5619 & .2888 & .0912 & .5821 & .2445 & .1059 & .6722 & .3567 & .1868 \\
\bottomrule
\end{tabular}
\caption{{\color{black}Cross-domain transfer without target-domain updates (scores in $[0,1]$).}}
\label{tab:cross_domain_transfer}
\end{table}

\paragraph{Incomplete dependency graphs.}
We keep the gold workflows unchanged, remove a controlled fraction of dependency edges from the input graph before training and inference, and recompute edge and 2-cell features from the remaining graph. Table~\ref{tab:incomplete_graph} shows that node selection is comparatively stable, whereas link recovery degrades more strongly because it depends directly on the available structural evidence. {\color{black}This diagnostic covers controlled missing edges, not general corruption.}

\begin{table}[H]
\centering
\small
\begin{tabular}{lrrrr}
\toprule
Dataset & Missing edges & N-F1 & L-F1 & Acc \\
\midrule
HuggingFace & 0\% & 83.43 & 59.62 & 47.35 \\
HuggingFace & 10\% & 82.01 & 53.50 & 43.68 \\
HuggingFace & 20\% & 80.77 & 44.17 & 40.00 \\
HuggingFace & 50\% & 80.49 & 31.61 & 38.09 \\
Multimedia & 0\% & 89.08 & 69.59 & 62.20 \\
Multimedia & 10\% & 87.52 & 60.97 & 58.78 \\
Multimedia & 20\% & 87.79 & 58.96 & 59.63 \\
Multimedia & 50\% & 87.69 & 43.42 & 58.78 \\
\bottomrule
\end{tabular}
\caption{{\color{black}Robustness to missing dependency edges (\%).}}
\label{tab:incomplete_graph}
\end{table}

\subsection{Retrieval Scale and Workflow Length}
\label{app:scale_length}

Retrieval scores cells in the available tool graph rather than enumerating every path or subgraph. All tool nodes and 2-cells are retained; only the dense Daily Life edge set uses a deterministic 512-edge prefilter. {\color{black}Table~\ref{tab:retrieval_pool} reports timing without LLM generation.}

\begin{table}[H]
\centering
\small
\begin{tabular}{lrrr}
\toprule
Dataset & Pool (nodes/edges/2-cells) & Edge candidates & CPU ms \\
\midrule
HuggingFace & 23/225/54 & 225 & 23.65 \\
Multimedia & 40/449/85 & 449 & 33.11 \\
Daily Life & 40/1560/146 & 512 & 39.45 \\
ToolBench & 245/366/19 & 366 & 22.89 \\
\bottomrule
\end{tabular}
\caption{{\color{black}Retrieval-pool size and mean CPU time over 50 fixed queries.}}
\label{tab:retrieval_pool}
\end{table}

Table~\ref{tab:length_analysis} groups examples by gold tool-call length and compares TopoPlanner with GTool on the same examples. The gains generally increase for medium and long workflows, while the lower length MAE indicates that the improvement is not explained only by producing longer sequences.

\begin{table}[H]
\centering
\small
\setlength{\tabcolsep}{4pt}
\begin{tabular}{llrrrrr}
\toprule
Dataset & Gold length & \#Ex. & $\Delta$N-F1 & $\Delta$L-F1 & $\Delta$Acc & Length MAE (GTool $\rightarrow$ Ours) \\
\midrule
HuggingFace & 1--2 & 31 & +5.64 & +11.83 & +12.10 & 0.32 $\rightarrow$ 0.19 \\
& 3--4 & 484 & +14.94 & +18.59 & +22.37 & 0.67 $\rightarrow$ 0.27 \\
& $\geq 5$ & 165 & +24.62 & +26.03 & +18.34 & 1.52 $\rightarrow$ 0.96 \\
Multimedia & 1--2 & 45 & +6.08 & +13.33 & +15.55 & 0.20 $\rightarrow$ 0.07 \\
& 3--4 & 513 & +23.90 & +27.98 & +38.84 & 0.94 $\rightarrow$ 0.31 \\
& $\geq 5$ & 262 & +27.50 & +30.88 & +33.31 & 1.68 $\rightarrow$ 0.65 \\
Daily Life & 1--2 & 25 & +1.38 & +2.22 & +4.00 & 0.12 $\rightarrow$ 0.09 \\
& 3--4 & 416 & +15.44 & +16.87 & +32.63 & 0.71 $\rightarrow$ 0.07 \\
& $\geq 5$ & 141 & +27.14 & +28.07 & +40.07 & 1.99 $\rightarrow$ 0.17 \\
\bottomrule
\end{tabular}
\caption{{\color{black}TopoPlanner--GTool differences by gold tool-call length (percentage points).}}
\label{tab:length_analysis}
\end{table}

\endgroup
\section{{\color{black}Workflow Examples and Structural Interpretation}}
\label{app:qualitative_analysis}

{\color{black}The following examples illustrate how loop, merge, and loop-merge workflows are represented in the lifted complex. They explain the structural information available to TopoPlanner; because per-example baseline predictions are not shown, they are not presented as qualitative evidence of baseline failure.}

\subsection{Case Study 1: HuggingFace}
\textbf{Example ID:} \texttt{paper\_cycle\_huggingface\_0005}. \\
\textbf{Request summary:} extract information from a historical document, highlight key entities and dates, convert the result to speech, enhance the audio, transcribe it back, and then revisit the original document question answering step using the refined transcript.

\paragraph{Topology.}
This workflow forms a {\color{black}loop-merge} pattern:
\texttt{Document QA \(\rightarrow\) Token Classification \(\rightarrow\) Text-to-Speech \(\rightarrow\) Audio-to-Audio \(\rightarrow\) ASR \(\rightarrow\) Document QA},
with an additional direct edge \texttt{Document QA \(\rightarrow\) Text-to-Speech}. The direct edge matters because the speech synthesis stage must inherit both the initial extraction and the entity-focused refinement, while the ASR output closes a feedback loop for iterative correction.

\paragraph{{\color{black}Structural interpretation.}}
{\color{black}The lifted representation places the direct merge path and the feedback path on the boundary of the same cycle-level region. Boundary-closed retrieval therefore exposes both relations to the planner rather than representing only a single linear path.}

\subsection{Case Study 2: Multimedia}
\textbf{Example ID:} \texttt{paper\_cycle\_multimedia\_0002}. \\
\textbf{Request summary:} transcribe a raw video interview, summarize it, generate relevant topics, rewrite the narration, denoise the audio, synchronize the new voiceover with the video, and refine the narration again if synchronization is unsatisfactory.

\paragraph{Topology.}
The workflow is:
\texttt{Video-to-Text \(\rightarrow\) Text Summarizer \(\rightarrow\) Topic Generator \(\rightarrow\) Voice Changer \(\rightarrow\) Audio Noise Reduction \(\rightarrow\) Video Synchronization \(\rightarrow\) Video-to-Text},
with an additional direct edge \texttt{Video-to-Text \(\rightarrow\) Topic Generator}. This creates a multi-stage feedback loop in which the synchronization result may trigger another transcription-and-revision pass.

\paragraph{{\color{black}Structural interpretation.}}
{\color{black}The region contains both the shortcut into \texttt{Topic Generator} and the closing edge back to \texttt{Video-to-Text}, making the dual-source topic dependency and synchronization feedback jointly available during cellular reasoning.}

\subsection{Case Study 3: Daily Life}
\textbf{Example ID:} \texttt{paper\_cycle\_dailylife\_0005}. \\
\textbf{Request summary:} share a conference poster, print it for a manager, book a flight, pay the registration bill, search for attendee reviews, and if necessary share a cancellation notice and rebook a later flight.

\paragraph{Topology.}
The workflow is a loop-merge pattern:
\texttt{share\_by\_social\_network \(\rightarrow\) print\_document \(\rightarrow\) book\_flight \(\rightarrow\) daily\_bill\_payment \(\rightarrow\) search\_by\_engine \(\rightarrow\) share\_by\_social\_network},
with an additional direct edge \texttt{share\_by\_social\_network \(\rightarrow\) book\_flight}. {\color{black}The request therefore couples communication, logistics, payment, and post-hoc revision in a closed rather than one-way workflow.}

\paragraph{{\color{black}Structural interpretation.}}
{\color{black}Boundary closure keeps the direct booking dependency together with the late feedback edge that reopens communication and trip planning, representing the request as a closed region rather than a one-way chain.}

\subsection{Case Study 4: ToolBench}
\textbf{Example ID:} \texttt{toolbench\_circle\_0005}. \\
\textbf{Request summary:} generate a face image, consult a zodiac-sign prediction, produce a flirty line conditioned on that prediction, and then adjust the generated face based on the tone of the final line.

\paragraph{Topology.}
This ToolBench example forms a compact {\color{black}loop-merge} structure:
\texttt{generate a face \(\rightarrow\) zodiac prediction \(\rightarrow\) generate flirty line \(\rightarrow\) generate a face},
with an additional direct edge \texttt{generate a face \(\rightarrow\) generate flirty line}. The direct edge expresses that the final line depends not only on zodiac information but also on the face-generation context, while the feedback edge turns the line into a refinement signal for the image generator.

\paragraph{{\color{black}Structural interpretation.}}
{\color{black}Here the cycle-level cell couples the direct face-to-line dependency with the feedback edge from the line to image generation. This makes the merge and refinement relations jointly available even in ToolBench's larger tool inventory.}
\section{Additional Related Work}
\label{app:related}
\paragraph{Topological Deep Learning.} 

Topological deep learning (TDL) builds upon early advances in Topological Signal Processing (TSP)~(\cite{barbarossa2020topological, sardellitti2022cell}), which emphasized that many real-world systems exhibit interactions that go beyond pairwise relations and are more naturally described by high-dimensional topological structures. This insight has motivated the generalization of classical graph-theoretic tools from pairwise graphs to richer topological domains, such as simplicial complexes (SCs), which model interactions among fixed-size groups, and cell complexes (CWs), which further generalize this representation to arbitrary polyhedral cells.

Along this line, extensions of the Weisfeiler--Lehman (WL) test to SCs and CWs~(\cite{bodnar2021weisfeiler, DBLP:conf/nips/BodnarFOWLMB21}) have provided theoretical foundations for distinguishing and propagating high-dimensional structural information, thereby enabling principled message passing beyond nodes and edges. {\color{black}Building on these ideas, convolutional architectures operate on SCs and CWs~(\cite{ebli2020simplicial, yang2022effsimpl, hajij2020cell, yang2023convolutional, hajij2022}), while attentional variants model higher-dimensional neighborhoods~(\cite{goh2022simplicial, giusti2023cell}).}

To unify these heterogeneous developments, the combinatorial complex (CC) framework~(\cite{hajij2023topological}) was introduced as a general abstraction that subsumes SCs, CWs, and hypergraphs within a single message-passing formalism, allowing a systematic treatment of multi-dimensional relational structures. In parallel, Sheaf Neural Networks (SNNs)~(\cite{HGh19, hansen2020sheaf, sheaf2022, battiloro2024tangent, barbero2022sheaf,wei2025persistent}) have explored a complementary perspective by modeling local consistency constraints via sheaf theory, demonstrating strong performance in settings with heterophily and structured local interactions.

Recently, topological perspectives have been applied to analyze reasoning in large language models. \cite{DBLP:journals/corr/abs-2506-05744} introduce \emph{reasoning graphs} by clustering hidden-state representations across reasoning steps and analyze their structural properties---such as cyclicity, diameter, and small-world characteristics---on mathematical reasoning benchmarks. These developments suggest that topological structure is increasingly relevant to LLM reasoning, but direct topology-aware planning over tool workflows remains underexplored, which is the gap our work addresses.

\section{Computational Complexity Analysis}
\label{app:complexity}

{\color{black}This section characterizes the computational cost of TopoPlanner.} The total computational overhead can be decomposed into four stages aligned with the model design: cellular workflow lifting, cosheaf-consistent cellular retrieval, multi-dimensional structural reasoning, and LLM tool planning.

\subsection{Theoretical Analysis}

\paragraph{1. Topological Lifting.}
{\color{black}The construction extracts a fundamental cycle basis from the underlying undirected graph $G_u=(V,E)$. For a connected graph, we first construct a spanning tree $\mathcal{T}$ in $O(|V|+|E|)$ time. Each of the $\beta_1=|E|-|V|+1$ non-tree edges induces one fundamental cycle; standard cycle-basis algorithms compute the basis in $O(|V|\cdot|E|)$ time.}
\begin{itemize}
    \item \textbf{Efficiency:} This lifting can be precomputed for each dataset/tool graph and cached together with the graph features. During training and inference, TopoPlanner reads the precomputed cellular structure and only applies request-dependent retrieval and encoding.
\end{itemize}

\paragraph{2. Cosheaf-Consistent Cellular Retrieval.}
For each request, TopoPlanner scores cells by request relevance and cosheaf-induced boundary compatibility. The practical implementation first applies a cheap edge prefilter, then performs closure-preserving greedy selection. {\color{black}Its cost is linear in the number of retained candidate cells after prefiltering; Table~\ref{tab:retrieval_pool} reports the measured retrieval time separately from LLM decoding.}

\paragraph{3. Multi-Dimensional Structural Reasoning.}
Our encoder employs a message-passing neural network over the retrieved cell complex. Let $\mathcal{X}^{*}_q$ denote the retrieved subcomplex and $\mathcal{E}_{adj}$ denote the boundary and coface adjacency connections in the lifted complex.
\begin{itemize}
    \item {\color{black}One reasoning layer costs $O(|\mathcal{X}^{*}_q| \cdot d^2 + |\mathcal{E}_{adj}| \cdot d)$ for hidden dimension $d$.}
    \item {\color{black}\textbf{Linear scaling in retained incidences:} Unlike global Transformer attention, which scales as $O(|\mathcal{X}^{*}_q|^2)$, message passing is confined to boundary and coface relations. Its cost is therefore linear in the retained incidence connections; for the dense Daily Life edge pool, retrieval first applies the 512-edge prefilter in Table~\ref{tab:retrieval_pool}.}
\end{itemize}

\paragraph{4. LLM Tool Planning Overhead.}
{\color{black}The local open-source variant decodes autoregressively with a frozen planner LLM and one injected graph token. The API variant uses topology to rerank proposals.}
\begin{itemize}
    \item {\color{black}LLM latency dominates, with approximate per-token complexity $O(T \cdot d_{model}^2)$.}
    \item {\color{black}\textbf{Measured retrieval cost:} Table~\ref{tab:retrieval_pool} reports mean CPU times of 22.89--39.45 ms over 50 fixed queries. These measurements exclude LLM generation and therefore do not establish end-to-end latency overhead.}
\end{itemize}

\subsection{Scalability Discussion}
{\color{black}Enumerating all simple cycles can be exponential, especially in dense graphs. TopoPlanner instead constructs one fundamental cycle per non-tree edge, so the untruncated number of 2-cells equals the cycle rank $\beta_1=|E|-|V|+1$ for a connected graph. This remains linear in graph size even for the dense Daily Life graph; its request-time retrieval additionally applies the deterministic 512-edge prefilter reported in Table~\ref{tab:retrieval_pool}.}

\end{document}